\documentclass[nohyperref]{article}

\usepackage{microtype}
\usepackage{graphicx}
\usepackage{booktabs} %

\usepackage{hyperref}
\usepackage{url}

\usepackage[accepted]{icml2023}

\usepackage{amsmath}
\usepackage{amssymb}
\usepackage{mathtools}
\usepackage{amsthm}
\usepackage{amsfonts}

\usepackage[capitalize,noabbrev]{cleveref}
\crefformat{footnote}{#2\footnotemark[#1]#3}

\usepackage{nicefrac}       %
\usepackage{microtype}      %
\usepackage{xcolor}         %

\usepackage{calligra}
\usepackage[T1]{fontenc}
\usepackage{subcaption}
\usepackage{multirow}
\usepackage[normalem]{ulem}
\usepackage{sidecap}
\sidecaptionvpos{figure}{t}

\usepackage{selectp}

\usepackage[textsize=tiny]{todonotes}

\icmltitlerunning{Learning Control by Iterative Inversion}

\newcommand{\state}{s}
\newcommand{\statespace}{S}
\newcommand{\action}{a}
\newcommand{\actionspace}{A}
\newcommand{\traj}{\tau}
\newcommand{\trajstate}{\tau_{\state}}
\newcommand{\trajaction}{\tau_{\action}}
\newcommand{\trajspace}{\Omega}
\newcommand{\trajspacea}{\Omega_\action}
\newcommand{\trajspaces}{\Omega_\state}
\newcommand{\histspace}{H}

\newcommand{\intent}{z}
\newcommand{\intentspace}{Z}
\newcommand{\policy}{\pi}

\newcommand{\prevbuffer}{D_{\mathrm{prev}}}

\newcommand{\steeringbuffer}{D_{\mathrm{steer}}}
\newcommand{\steerratio}{\alpha}
\newcommand{\embed}{\mathcal{Z}}

\newcommand{\methodnamefull}{Iterative Inversion}
\newcommand{\methodname}{IT-IN}

\newcommand{\loss}{\mathcal{L}}
\newcommand{\noise}{\eta}

\newcommand{\F}{\mathcal{F}}

\DeclareMathOperator*{\argmin}{arg\,min}

\newtheorem{definition}{Definition}
\newtheorem{theorem}{Theorem}

\newtheorem{assumption}{Assumption}
\newtheorem{lemma}[theorem]{Lemma}

\newtheorem{remark}{Remark}
\newtheorem{example}{Example}

\newcommand{\alglinelabel}{%
  \addtocounter{ALC@line}{-1}%
  \refstepcounter{ALC@line}%
  \label%
}

\begin{document}

\twocolumn[
\icmltitle{Learning Control by Iterative Inversion}

\icmlsetsymbol{equal}{*}

\begin{icmlauthorlist}
\icmlauthor{Gal Leibovich${^{*}}$}{Intel Labs}
\icmlauthor{Guy Jacob${^{*}}$}{Intel Labs}
\icmlauthor{Or Avner${^{*}}$}{Technion}
\icmlauthor{Gal Novik}{Intel Labs}
\icmlauthor{Aviv Tamar}{Technion}
\end{icmlauthorlist}

\icmlaffiliation{Technion}{Department of Electrical Engineering, Technion, Haifa, Israel}
\icmlaffiliation{Intel Labs}{Intel Labs, Haifa, Israel}

\icmlcorrespondingauthor{Gal Leibovich}{gal.leibovich@intel.com}
\icmlcorrespondingauthor{Guy Jacob}{guy.jacob@intel.com}
\icmlcorrespondingauthor{Or Avner}{or.avner5@gmail.com}
\icmlcorrespondingauthor{Aviv Tamar}{avivt@technion.ac.il}

\icmlkeywords{Reinforcement Learning, Machine Learning, RL, IRL, ICML}

\vskip 0.3in
]

\printAffiliationsAndNotice{Equal contribution, author order determined by coin toss.} %

\begin{abstract}

We propose \textit{iterative inversion} -- an algorithm for learning an inverse function without input-output pairs, but only with samples from the desired output distribution and access to the forward function. The key challenge is a \textit{distribution shift} between the desired outputs and the outputs of an initial random guess, and we prove that iterative inversion can steer the learning correctly, under rather strict conditions on the function. We apply iterative inversion to learn control. Our input is a set of demonstrations of desired behavior, given as video embeddings of trajectories (without actions), and our method iteratively learns to imitate trajectories generated by the current policy, perturbed by random exploration noise. 
Our approach does not require rewards, and only employs supervised learning, which can be easily scaled to use state-of-the-art trajectory embedding techniques and policy representations. Indeed, 
with a VQ-VAE embedding, and a transformer-based policy, we demonstrate non-trivial continuous control on several tasks (videos available at \url{https://sites.google.com/view/iter-inver}). Further, we report an improved performance on imitating diverse behaviors compared to reward based methods.

\end{abstract}

\section{Introduction}\label{s:intro}

The control of dynamical systems is fundamental to various disciplines, such as robotics and automation. Consider the following trajectory tracking problem. Given some deterministic but unknown actuated dynamical system,
\begin{equation}\label{eq:dyn_sys}
    \state_{t+1} = f(\state_t, \action_t),
\end{equation}
where $\state$ is the state, and $\action$ is an actuation, and some reference trajectory, $s_0,\dots,s_T$, we seek actions that drive the system in a similar trajectory to the reference. 

For systems that are `simple' enough, e.g., linear, or low dimensional, classical control theory~\citep{bertsekas1995dynamic} offers principled and well-established system identification and control solutions. However, for several decades, this problem has captured the interest of the machine learning community, where the prospect is scaling up to high-dimensional systems with complex dynamics by exploiting patterns in the system~\citep{mnih2015human,lillicrap2015continuous}.

In reinforcement learning (RL), learning is driven by a manually specified \textit{reward} signal $r(\state,\action)$.
While this paradigm has recently yielded impressive results, defining a reward signal can be difficult for certain tasks, especially when high-dimensional observations such as images are involved.
An alternative to RL is inverse RL (IRL), where a reward is not manually specified. Instead, IRL algorithms \textit{learn} an implicit reward function that, when plugged into an RL algorithm in an inner loop, yields a trajectory similar to the reference. The signal driving IRL algorithms is a \textit{similarity metric between trajectories}, which can be manually defined, or learned~\citep{ho2016generative}. 

We propose a different approach to learning control, which does not require explicit nor implicit reward functions, and also does not require access to a similarity metric between trajectories. Our main idea is that Equation \eqref{eq:dyn_sys} prescribes a mapping $\mathcal{F}$ from an action sequence to a state sequence,
\begin{equation}\label{eq:F_mapping}
    s_0,\dots,s_T = \mathcal{F}(a_0,\dots,a_{T-1}).
\end{equation}
The control learning problem can therefore be framed as finding the \textit{inverse function}, $\mathcal{F}^{-1}$, without knowing $\mathcal{F}$, but with the possibility of evaluating $\mathcal{F}$ on particular action sequences (a.k.a. roll-outs).

Learning the inverse function $\mathcal{F}^{-1}$ using regression can be easy if one has samples of action sequences and corresponding state sequences, and a distance measure over actions. However, in our setting, we do not know the action sequences that correspond to the desired reference trajectories -- a problem that we term \textit{inversion distribution shift}. Interestingly, for some mappings $\mathcal{F}$, an iterative regression technique can be used to find $\mathcal{F}^{-1}$. In this scheme, which we term \textit{\methodnamefull} (\methodname), we start from arbitrary action sequences, collect their corresponding state trajectories, and regress to learn an inverse. We then apply this inverse on the reference trajectories to obtain new action sequences, and repeat. We show that with linear regression, iterative inversion will converge under quite restrictive criteria on $\mathcal{F}$, such as being strictly monotone and with a bounded ratio of derivatives, by establishing a connection between iterative inversion and Newton's method.
Nevertheless, our result shows that for some systems, a controller can be found without a reward function, nor access to a distance measure on states (only actions), and that iterative inversion effectively \textit{steers} the learning to overcome distribution shift.

We then apply iterative inversion to several continuous control problems. In our setting, the desired behavior is expressed through a video embedding of a desired trajectory (without actions), using a VQ-VAE~\citep{van2017neural}, and a deep network policy maps this embedding and a state history to the next action.
The agent generates trajectories from the system using its current policy, \textit{given the desired embeddings as input}, and subsequently learns to imitate its own trajectories, conditioned on their own embeddings. We find that when iterating this procedure, 
the input of the desired trajectories' embeddings
\textit{steers} the learning towards the desired behavior, as in iterative inversion.

Given the strict conditions for convergence of iterative inversion, there is no a-priori reason to expect that our method will work for complex non-linear systems and expressive policies. Curiously, however, we report convergence on all the scenarios we tested, and furthermore, the resulting policy generalized well to imitating trajectories that were not seen in its `steering' training set. 
This surprising observation suggests that \methodname\
may offer a simple supervised learning-based alternative to methods such as RL and IRL, with several potential benefits such as a reward-less formulation, and the simplicity and stability of the (iterated) supervised learning loss function. Furthermore, on experiments where the desired behaviors are abundant and diverse, we report that \methodname\ outperforms reward-based methods, even with an accurate state-based reward.

\section{Iterative Inversion}\label{sec:iterative_inversion}

We describe a general problem of learning an inverse function under a distribution shift, and present the iterative inversion algorithm. We then analyse the convergence of iterative inversion in several simplified settings. In the proceeding, we will apply iterative inversion to learning control. 

Let $\mathcal{F}:\mathcal{X} \to \mathcal{Y}$ be a bijective function. We are given a set of $M$
desired outputs $y_1,\dots,y_M \in \mathcal{Y}$, and an \textit{arbitrary} set of $M$ initial inputs $x_1,\dots,x_M \in \mathcal{X}$. We assume that $\mathcal{F}$ is not known, but we are allowed to observe $\mathcal{F}(x)$ for any $x\in\mathcal{X}$ that we choose during our calculations. Our goal is to find a function $\mathcal{G}:\mathcal{Y} \to \mathcal{X}$ such that for any desired output $y_i$, we have
$
    \mathcal{G}(y_i) = \mathcal{F}^{-1}(y_i).
$

More specifically, we will adopt a parametric setting, and search for a parametric function $\mathcal{G}_\theta$, where $\theta\in \Theta$ is a parameter vector, that minimizes the average loss:
\begin{equation}\label{eq:inversion_problem}
    \min_{\theta\in\Theta} \frac{1}{M}\sum_{i=1}^{M} \mathcal{L}(\mathcal{G}_{\theta}(y_i), \mathcal{F}^{-1}(y_i)).
\end{equation}
For example, $\mathcal{G}_\theta$ could represent the space of linear functions $\mathcal{G}_\theta(y) = \theta^T y + \theta_{bias}$,
and $\mathcal{L}$ could be the squared error between inputs, $\mathcal{L}(x,x') = (x - x')^2$. This example, which is depicted in Figure \ref{fig:problem_illustration} for the 1-dimensional case $\mathcal{X}=\mathcal{Y}=\mathbb{R}$, corresponds to a linear least squares fit of the inverse function. As can be seen, the challenge arises from the \textit{inversion distribution shift} -- the mismatch between the distributions of the desired outputs and initial inputs, 
\begin{definition}[Inversion Distribution Shift]\label{def:dist_shift}
The difference between outputs of the initial distribution $\mathcal{F}(x_1),\dots,\mathcal{F}(x_M)$  and the desired outputs $ y_1,\dots, y_n$.
\end{definition}

\begin{figure}
    \centering
    \includegraphics[width=\columnwidth]{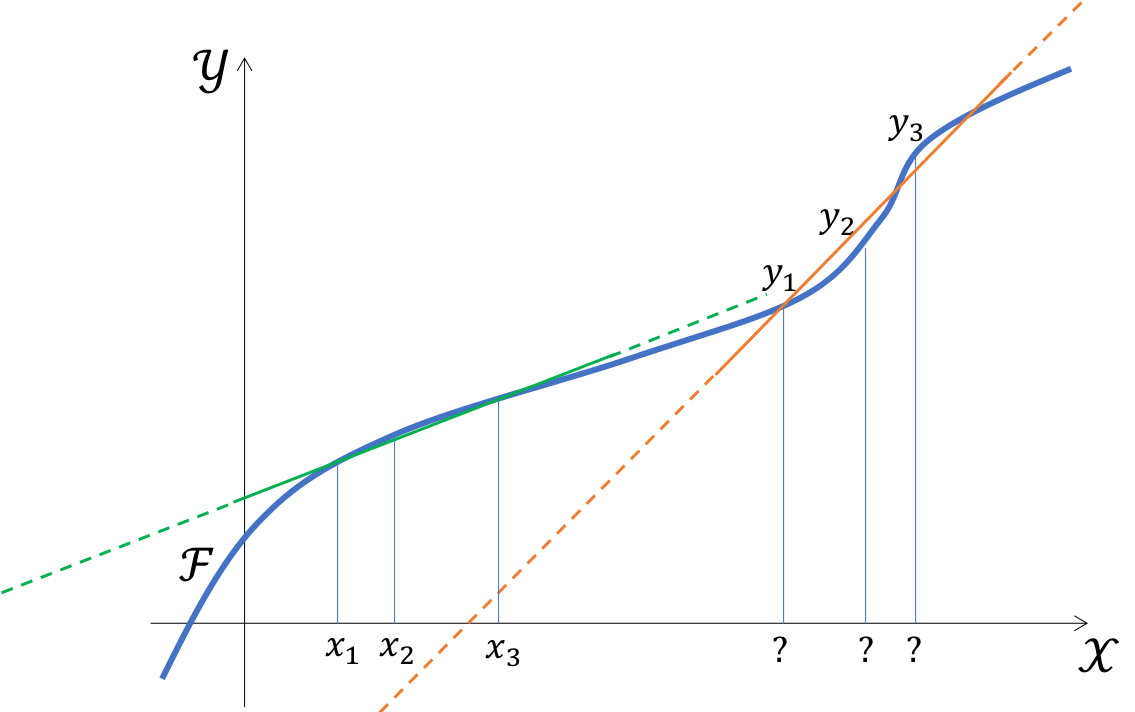}
    \caption{Learning an inverse function under a distribution shift. We wish to learn the inverse function over outputs $y_1,\dots,y_M$, using linear least squares, having matching inputs-outputs for $x_1,\dots,x_M$.}
    \label{fig:problem_illustration}
\end{figure}

The iterative inversion algorithm, proposed in Algorithm \ref{alg:II}, seeks to solve problem \eqref{eq:inversion_problem} iteratively. In the algorithm, and in the proceeding analysis, we define the initial inputs $x_1,\dots,x_M$ implicitly, as the inverse using an initial parameter $\theta_0$, i.e., $x_i = \mathcal{G}_{\theta_0}(y_i)$.
\begin{algorithm}[]
\caption{Iterative Inversion}\label{alg:II}
\begin{algorithmic}[1]
    \REQUIRE Desired outputs $y_1,\dots,y_M \in \mathcal{Y}$, loss function $\mathcal{L}:\mathcal{X}\times \mathcal{X} \to \mathbb{R}$, initial parameter $\theta_0$.
    \FOR{$n=0,1,2,\dots$}
        \STATE Calculate current inputs:\\
        \begin{center}
            $x_1^n,\dots,x_M^n = \mathcal{G}_{\theta_n}(y_1),\dots, \mathcal{G}_{\theta_n}(y_n)$
        \end{center}
        \STATE Calculate current outputs:\\
        \begin{center}
            $y_1^n,\dots,y_M^n = \mathcal{F}(x^n_1),\dots, \mathcal{F}(x^n_M)$
        \end{center}
        \STATE Regression:
        \begin{center}
            $\theta_{n+1} = \argmin_{\theta \in \Theta} \frac{1}{M}\sum_{i=1}^{M} \mathcal{L}(\mathcal{G}_{\theta}(y^n_i), x^n_i)$
        \end{center}
    \ENDFOR
\end{algorithmic}
\end{algorithm}

We next investigate when, and why, should iterative inversion produce an effective solution for \eqref{eq:inversion_problem}. For our analysis, we restrict ourselves to the following setting:
\begin{assumption}\label{ass:linear}
The function class $\mathcal{G}_\theta$ is linear, and the loss $\mathcal{L}$ is the squared error.    
\end{assumption}

We analyze convergence for different classes of functions $\mathcal{F}$.
Denote $X^n\equiv(x^n_1,\dots,x^n_M)^T\in\mathbb{R}^{M \times dim(\mathcal{X})}$, $\F(X^n)\equiv(\F(x_1^n),\dots,\F(x_M^n))^T\in\mathbb{R}^{M \times dim(\mathcal{Y})}$ as the input and output matrices, $\overline{X^n}\equiv\sum_{i=1}^M x_i^n / M\in\mathbb{R}^{dim(\mathcal{X})}$ and $\overline{Y}\equiv\sum_{i=1}^M y_i / M$, $\overline{\F(X^n)}\equiv\sum_{i=1}^M \F(x_i^n) / M\in\mathbb{R}^{dim(\mathcal{Y})}$ as the current inputs, desired outputs, and current outputs means,  $(\cdot)^\dagger$ the Moore-Penrose pseudoinverse operator, and $\F^{-1}$ the ground-truth inverse function.

We start with the simple case of a linear $\F$. As is clear from Figure \ref{fig:problem_illustration}, inverse distribution shift is not a problem in this case, as the inverse function is the same for any $x$,
and iterative inversion converges in a single iteration.

\begin{theorem}\label{the:linear_convergence}
     If Assumption \ref{ass:linear} holds, $\F$ is a linear function and %
    $rank(\F(X^0)-\overline{\F(X^0)})=dim(\mathcal{Y})$ then Algorithm \ref{alg:II} converges in one iteration, i.e.,  $y_1^1,\dots,y_M^1=y_1,\dots,y_M$.
\end{theorem}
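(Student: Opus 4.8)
The plan is to exploit the fact that when $\F$ is linear the target inverse lies \emph{inside} the hypothesis class $\{\mathcal{G}_\theta\}$, so the very first regression can fit it exactly. Concretely, since $\F$ is linear and bijective I would write $\F(x) = Ax + b$ with $A$ invertible (which in particular forces $dim(\mathcal{X}) = dim(\mathcal{Y})$), so that the ground-truth inverse is the affine map $\F^{-1}(y) = A^{-1} y - A^{-1} b$. This is exactly of the form $\mathcal{G}_\theta(y) = \theta^T y + \theta_0$, hence there exist parameters attaining zero loss in the regression of Step 4.

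First I would examine iteration $n=0$. By construction the current inputs are $x_i^0 = \mathcal{G}_{\theta_0}(y_i)$ and the current outputs are $y_i^0 = \F(x_i^0)$; inverting the affine relation $y_i^0 = A x_i^0 + b$ gives the exact identity $x_i^0 = A^{-1} y_i^0 - A^{-1} b = \F^{-1}(y_i^0)$ for every $i$. Thus the least-squares problem in Step 4, which fits $\mathcal{G}_\theta$ to the pairs $(y_i^0, x_i^0)$, is a linear regression whose data are \emph{exactly} consistent with an affine model.

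The key step is to argue that this regression recovers $\F^{-1}$ uniquely, and this is where the rank hypothesis enters. Absorbing the intercept by centering, Step 4 reduces to fitting the centered inputs $x_i^0 - \overline{X^0}$ from the centered outputs $y_i^0 - \overline{\F(X^0)}$; the normal equations have a unique minimizer precisely when the centered design matrix $\F(X^0) - \overline{\F(X^0)}$ has full column rank $dim(\mathcal{Y})$, which is exactly the assumption. (Equivalently, the minimizer is obtained by applying $(\cdot)^\dagger$ to this matrix, and full column rank makes the pseudoinverse a genuine left inverse.) Since a zero-loss affine solution exists and the minimizer is unique, the regression must return it, giving $\mathcal{G}_{\theta_1} = \F^{-1}$ (coordinate by coordinate of $x$, each a separate least-squares fit sharing the same full-rank design).

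Finally I would close the loop: with $\mathcal{G}_{\theta_1} = \F^{-1}$, iteration $n=1$ produces $x_i^1 = \mathcal{G}_{\theta_1}(y_i) = \F^{-1}(y_i)$ and hence $y_i^1 = \F(x_i^1) = \F(\F^{-1}(y_i)) = y_i$, which is the claim. I expect the only real obstacle to be the careful handling of the intercept term together with the rank condition — verifying that full column rank of the \emph{centered} output matrix (rather than the raw matrix) is exactly what guarantees a unique least-squares solution, and therefore exact recovery of the affine inverse in a single pass.
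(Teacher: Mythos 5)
Your proof is correct, but it takes a genuinely different route from the paper's. The paper computes the first regression explicitly: writing $\F(X)=XF+h$ and using $\F(X^0)-\overline{\F(X^0)}=(X^0-\overline{X^0})F$, it observes that the rank condition makes the pseudoinverse a true left inverse, so that $\Theta_1 F=I$, and then verifies by direct algebra that $\F(X^1)=Y\Theta_1 F+b_1F+h=Y$; at no point does it claim (or need) that $\mathcal{G}_{\theta_1}=\F^{-1}$. You instead argue abstractly: the pairs $(y_i^0,x_i^0)$ are exactly realizable by the affine map $\F^{-1}$ (which lies in the hypothesis class because a bijective linear $\F$ has an affine inverse), the rank condition makes the least-squares minimizer unique, hence the regression must return $\F^{-1}$ itself, and the claim follows trivially at the next iteration. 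Your route buys conceptual clarity and a stronger intermediate conclusion (exact recovery of the inverse map, not merely matching outputs); the paper's computation buys slightly weaker hypotheses -- it uses only linearity plus the rank condition, which entails surjectivity of $\F$ but not injectivity, whereas your realizability step genuinely needs $\F^{-1}$ to exist and so leans on the section's standing bijectivity assumption (legitimate, but worth acknowledging). Both proofs use the rank hypothesis for the same purpose: well-posedness of the very first regression. One point to spell out in your uniqueness step: the joint minimizer over $(\Theta,b)$ is unique because centered columns are orthogonal to the all-ones vector, so full column rank of the centered output matrix is equivalent to full column rank of the intercept-augmented design; you gesture at this via profiling out the intercept, but it deserves a line.
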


We next analyse a non-linear $\F$. Our insight is that iterative inversion can be interpreted as a variant of the classic Newton's method~\citep{ortega2000iterative}, where we replace the unknown Jacobian $J$ of $\F$ with a linear approximation using the current input-output pairs, and the evaluation of $\F$ with the mean of the current outputs.
Recall that Newton's method seeks to find the root $x^*$ of a function $r(x)=\F(x)-y$ using the iterative update rule
$
    x^{n+1}=x^n+(y-\F(x^n))[J(x^n)]^{-1},
$
where $[J(x^n)]^{-1}$ is the Jacobian inverse of $\F$ at $x^n$.
Iterative inversion, similarly, applies the following updating rule, as proved in Appendix \ref{app:update_rule},
\begin{equation}\label{eq:update_rule}
    \overline{X^{n+1}}=\overline{X^n} + \left(\overline{Y}-\overline{\F(X^n)}\right) \Tilde{J}^{-1}_{n} ,
\end{equation}
where $\Tilde{J}^{-1}_{n}\equiv(\F(X^n)-\overline{\F(X^n)})^{\dagger}(X^n-\overline{X^n})$ is the Jacobian of $\mathcal{G}_{\theta_{n+1}}$, the linear regressor plane from $\F(x)$ to $x$ at $x_1^n,\dots,x_M^n$, which can be considered to be an approximation of $[J(\overline{X^n})]^{-1}$. 
When the approximations $\Tilde{J}^{-1}_n\approx   [J(\overline{X^n})]^{-1}$ and $\overline{\F(X^n)}\approx \F(\overline{X^n})$ are accurate, iterative inversion coincides with Newton's method, and enjoys similar convergence properties, as we establish next.

\begin{assumption}\label{ass:f-multidim}
    $\F:\mathbb{R}^K\to\mathbb{R}^K$ is bijective\footnote{\label{foot:bijetion}The bijection assumption is required for our theoretical analysis, to properly define the inverse function. In our experiments, however, we consider intent and action spaces of different dimensions, which are not bijective. In addition, we experiment with environments with non-smooth dynamics, where the bounded derivative assumption does not hold.}, and $\F$ and $\F^{-1}$ are both continuously differentiable.
\end{assumption}
Denote $J(x)$ the Jacobian matrix of $\F$ at $x\in\mathcal{X}$, and $J^{-1}(x)\equiv[J(x)]^{-1}$ the inverse matrix of $J(x)$ and the Jacobian of $\F^{-1}$ at $\F(x)\in\mathcal{Y}$, under Assumption \ref{ass:f-multidim}. Also denote $\|\cdot\|$ to be any induced norm~\citep{matrixanalysisbook}. We assume that the derivatives of $\F$ and $\F^{-1}$ are bounded.
\begin{assumption}\label{ass:d-bounds}
    $\| J(x_1) - J(x_2) \| \leq \gamma$, $\| J(x) \| \leq \zeta$ and $\| J^{-1}(x) \| \leq \beta$ $\forall x_1,x_2,x\in\mathbb{R}^K$.\cref{foot:bijetion}
\end{assumption}
Further assume that at every iteration $n$, the approximations $\Tilde{J}_n^{-1}$ and $\overline{\F(X^n)}$ are accurate enough. 
\begin{assumption}\label{ass:approx}
    $\forall n$: $\| \overline{\F(X^n)}-\F(\overline{X^n})\|\leq \lambda$ and $\Tilde{J}_n^{-1}=J^{-1}(\overline{X^n})(I+\Delta_n)$, $\| \Delta_n \| \leq \delta < 1/{\zeta \beta}$.
\end{assumption}
Assumption \ref{ass:approx} may hold, for example, when the inputs $x_1^n,\dots,x_M^n$ are distributed densely, relative to the curvature of $\F$, and evenly, such that the regression problem in Algorithm \ref{alg:II} is well-conditioned. The requirement $\delta<1/\zeta\beta$ is set to ensure that $\Tilde{J}_n^{-1}$ is non-singular. 

\begin{theorem}\label{the:multidim}
 Suppose Assumptions \ref{ass:linear}, \ref{ass:f-multidim}, \ref{ass:d-bounds} and \ref{ass:approx} hold. Let $\mu\equiv\frac{\zeta^2 \beta \delta}{1-\zeta \beta \delta}$ and assume $\beta(1+\delta)(\gamma + \mu)<1$. Let $\rho \equiv \frac{2\lambda\beta(1+\delta)(\mu + \zeta)}{1-\beta(1+\delta)(\mu+\gamma)}$. Then for every $\epsilon>0$ there exists $k<\infty$ such that $\| \overline{\F(X^k)} - \overline{Y} \| \leq \rho + \epsilon$.
\end{theorem}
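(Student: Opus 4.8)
The plan is to reduce the theorem to a one-dimensional contraction argument on the scalar error sequence $a_n \equiv \|\F(\overline{X^n}) - \overline{Y}\|$, showing it obeys a recursion of the form $a_{n+1} \leq c\, a_n + b$ with contraction factor $c = \beta(1+\delta)(\gamma+\mu) < 1$ (exactly the hypothesis of the theorem) and an additive floor $b$ proportional to $\lambda$. This is the inexact-Newton analogue of the classical Newton convergence proof: because Assumption \ref{ass:d-bounds} only bounds the \emph{variation} of the Jacobian uniformly by $\gamma$ (rather than being Lipschitz in distance), I do not expect quadratic convergence, but rather geometric convergence down to an irreducible error floor $\rho = b/(1-c)$, which is why the statement claims only $a_k \leq \rho+\epsilon$ for some finite $k$.

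To set up the recursion I would start from the update rule \eqref{eq:update_rule}, write the measured residual as $\overline{Y}-\overline{\F(X^n)} = -(\F(\overline{X^n})-\overline{Y}) - \xi_n$ with $\|\xi_n\|\leq\lambda$ by Assumption \ref{ass:approx}, and propagate it one step through $\F$. The central tool is the integral mean value identity $\F(\overline{X^{n+1}})-\F(\overline{X^n}) = (\overline{X^{n+1}}-\overline{X^n})\bar J_n$, where $\bar J_n = \int_0^1 J(\overline{X^n}+t(\overline{X^{n+1}}-\overline{X^n}))\,dt$ is the average Jacobian along the step. Substituting $\overline{X^{n+1}}-\overline{X^n}=(\overline{Y}-\overline{\F(X^n)})\Tilde J_n^{-1}$, the decomposition $\Tilde J_n^{-1}=J^{-1}(\overline{X^n})(I+\Delta_n)$ from Assumption \ref{ass:approx}, and the splitting $\bar J_n = J(\overline{X^n}) + E_n$ with $\|E_n\|\leq\gamma$ (which follows by pulling the uniform bound of Assumption \ref{ass:d-bounds} under the integral sign), I obtain, with $d_n = \F(\overline{X^n})-\overline{Y}$,
\[ a_{n+1} = \big\| d_n\,[I - \Tilde J_n^{-1}\bar J_n] - \xi_n\,\Tilde J_n^{-1}\bar J_n \big\|. \]
Everything then reduces to bounding $\|I-\Tilde J_n^{-1}\bar J_n\|$ and $\|\Tilde J_n^{-1}\bar J_n\|$ using the operator-norm bounds $\zeta,\beta,\delta,\gamma$.

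The $\gamma$ contribution to $c$ comes directly from the $E_n$ term via $\|J^{-1}(\overline{X^n})(I+\Delta_n)E_n\|\leq\beta(1+\delta)\gamma$. The subtler term $\mu$ is where I expect the main work: it quantifies the residual mismatch left by the inexact inverse Jacobian $\Delta_n$ after the $J^{-1}J$ cancellation, and to land on exactly $\mu=\frac{\zeta^2\beta\delta}{1-\zeta\beta\delta}$ I would bound the relevant $\Delta_n$-terms by a Neumann-series estimate whose ratio is $\zeta\beta\delta$. This is precisely where the requirement $\delta<1/(\zeta\beta)$ in Assumption \ref{ass:approx} is needed, both to make that series converge and to keep the denominator $1-\zeta\beta\delta$ positive. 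The delicate part of the proof will be tracking these compounded errors and arranging the bookkeeping so the constants assemble into precisely $\mu$, the factor $(1+\delta)$, and the factor $2$ in the numerator of $\rho$ — the $2$ arising because the $\lambda$-error enters in two places, once in the residual $\xi_n$ and once when converting between $\F(\overline{X^n})$ and the measured mean $\overline{\F(X^n)}$.

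Finally, collecting the bounds yields $a_{n+1}\leq c\,a_n + b$ with $c=\beta(1+\delta)(\gamma+\mu)$ and $b$ the $\lambda$-proportional floor. Since $c<1$ by hypothesis, unrolling the recursion gives $a_n \leq c^n a_0 + b\sum_{i=0}^{n-1}c^i \to \frac{b}{1-c}=\rho$ as $n\to\infty$, so $\limsup_n a_n \leq \rho$. Hence for every $\epsilon>0$ there is a finite $k$ with $a_k \leq \rho+\epsilon$; converting $a_k=\|\F(\overline{X^k})-\overline{Y}\|$ back to the measured residual $\|\overline{\F(X^k)}-\overline{Y}\|\leq a_k+\lambda$ costs at most one further $\lambda$, consistent with the factor $2$ already built into $\rho$, which gives the claim.
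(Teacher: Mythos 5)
Your approach is correct, and it is genuinely different from the paper's. The paper follows the Ortega--Rheinboldt iterated-contraction template: it inverts the approximate Jacobian $\Tilde{J}_n^{-1}$ (this is where the non-singularity argument and the Horn--Johnson perturbation bound $\|\Tilde{J}_n-J_n\|\leq\mu$ are needed), and proves a contraction on consecutive \emph{step sizes} $\|\overline{X^{n+1}}-\overline{X^n}\|$ with a residual-dependent factor $g(\|\overline{\F(X^n)}-\overline{Y}\|)$ that is below $1$ only while the residual exceeds $\rho$; the conclusion then follows from a case analysis on whether the residual ever drops below $\rho+\epsilon$. You instead derive a constant-coefficient affine recursion directly on the true residual $a_n=\|\F(\overline{X^n})-\overline{Y}\|$: your identity $d_{n+1}=d_n\left(I-\Tilde{J}_n^{-1}\bar{J}_n\right)-\xi_n\Tilde{J}_n^{-1}\bar{J}_n$ is exactly right, and expanding $\Tilde{J}_n^{-1}\bar{J}_n=J_n^{-1}(I+\Delta_n)(J_n+E_n)$ with $\|E_n\|\leq\gamma$ gives $\|I-\Tilde{J}_n^{-1}\bar{J}_n\|\leq\beta(1+\delta)\gamma+\beta\delta\zeta$ and $\|\Tilde{J}_n^{-1}\bar{J}_n\|\leq\beta(1+\delta)(\zeta+\gamma)$. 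Two of your expectations are therefore off, but in a harmless direction: (i) no Neumann series is needed on your route --- the $\Delta_n$ terms are bounded directly by $\beta\delta(\zeta+\gamma)$; the Neumann-type estimate is what the \emph{paper} needs because it inverts $\Tilde{J}_n^{-1}$, which you never do; (ii) your natural contraction factor $c'=\beta(1+\delta)\gamma+\beta\delta\zeta$ and floor are \emph{tighter} than the paper's constants, not equal to them, so you will not "land on exactly $\mu$." To get the theorem as stated you must then check that your bound is dominated by $\rho$: weaken $c'$ to $c=\beta(1+\delta)(\gamma+\mu)$ (legitimate since $a_n\geq 0$ and $c'\leq c$), and verify $\frac{\lambda\beta(1+\delta)(\zeta+\gamma)}{1-c}+\lambda\leq\rho$, which after clearing denominators reduces to $1\leq 3\beta(1+\delta)\mu+\beta(1+\delta)\zeta$. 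This holds because $\zeta\beta\geq\|J\|\,\|J^{-1}\|\geq\|I\|=1$ for any induced norm; that inequality is the one missing ingredient you should state explicitly, and it also justifies $c'\leq c$. As for what each route buys: yours yields the cleaner asymptotic statement $\limsup_n a_n\leq b/(1-c)$ with slightly sharper constants and avoids inverting the estimated Jacobian altogether, while the paper's step-size contraction stays in terms of the measured quantities $\overline{\F(X^n)}$ and inherits the classical inexact-Newton structure, at the cost of a residual-dependent contraction factor and the resulting $\epsilon$-style conclusion.
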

Theorem \ref{the:multidim} shows that under sufficient conditions, the iterative inversion method is able to steer learning across any distribution shift, bringing the average output $\overline{\F(X^k)}$ close to the average desired output $\overline{Y}$, regardless of the initial $X^0$. The term $\rho$ can be interpreted as the radius of the ball centered at $\overline{Y}$ that the sequence convergences to.
The proof for Theorem \ref{the:multidim} builds on the analysis of Newton's method to show that \methodname\ is an iterated contraction, and is reported in Section \ref{app:multidim} of the supplementary material. 
To get some intuition about Theorem \ref{the:multidim}, consider the following example.
\begin{example}\label{ex:1d}
Consider the $1$-dimensional case 
$\F:\mathbb{R}\to\mathbb{R}$ (cf. Fig.~\ref{fig:problem_illustration}), where the second approximation in Assumption \ref{ass:approx} are perfect, i.e., $\delta=\mu=0$. Then, the condition for convergence is $\beta(1+\delta)(\gamma+\mu)=\beta \gamma < 1$, which is equivalent to $\frac{\max|\F'(x)|}{\min|\F'(x)|}<2$, i.e., an $\F$ that is `close to linear'. 
\end{example}
The conditions in Theorem \ref{the:multidim} can therefore be intuitively interpreted as $\F$ being `close to linear' globally, and the linear approximation being accurate locally.

In Appendix \ref{app:1d-results}, we provide additional convergence results that use a different analysis technique for the simple case presented in Example \ref{ex:1d}, where $\F:\mathbb{R}\to\mathbb{R}$. These results do not require Assumption \ref{ass:approx}, but still require a condition similar to $\frac{\max|\F'(x)|}{\min|\F'(x)|}<2$, and show a linear convergence rate. We further remark that a quadratic convergence rate is known for Newton's method when the initial iterate is close to optimal; we believe that similar results can be shown for \methodname\ as well. Here, however, we focused on the case of an arbitrary initial iterate, similarly to the experiments we shall describe in the sequel.

\section{Iterative Inversion for Learning Control}

In this section, we apply iterative inversion for learning control. We first present our problem formulation, and then propose an \methodname\ algorithm.

We follow a standard RL formulation. Let $\statespace$ denote the state space,  $\actionspace$ denote the action space, and consider the dynamical system in Equation \ref{eq:dyn_sys}. We assume, for simplicity, that the initial state $s_0$ is fixed, and that the time horizon is $T$.\footnote{A varying horizon can be handled as an additional input to $\mathcal{F}$.}
Given a state-action trajectory $\traj = \state_0,\action_0,\dots,\state_{T-1},\action_{T-1},\state_T\in\trajspace$, where $\trajspace$ denotes the $T$-step trajectory space, we denote by $\trajstate \in \trajspaces$ its state component and by $\trajaction \in \trajspacea$ its action component, i.e., $\trajstate = \state_0,\dots,\state_T$, $\trajaction = \action_0,\dots,\action_{T-1}$, and $\trajspace = \trajspaces \times \trajspacea$.  We will henceforth refer to $\trajstate$ as a state trajectory and to $\trajaction$ as an action trajectory. Let $\mathcal{F}$ denote the mapping from an action trajectory to the resulting state trajectory, as given by Equation \eqref{eq:F_mapping}. 

For presenting our control learning problem, we will assume that $\mathcal{F}$ is bijective, and therefore $\mathcal{F}^{-1}$ is well defined. We emphasize, however, that \textit{our algorithm makes no explicit use of} $\mathcal{F}^{-1}$, and our empirical results are demonstrated on problems where this assumption does not hold.

We represent a state trajectory using an embedding function $\intent = \embed(\trajstate) \in \intentspace$, and we term $\intent$ the \textit{intent}. Note that $z$, by definition, can contain partial information about $\trajstate$, such as the goal state~\citep{ghosh2019learning}. In all the experiments reported in the sequel, we generated intents by feeding a rendered video of the state trajectory into a VQ-VAE encoder, which we found to be simple and well performing. 

Consider a state-action trajectory $\traj$, with a corresponding intent $\embed(\trajstate)$. We would like to learn a policy that reconstructs the intent into its corresponding action trajectory $\trajaction$, and can be used to control the system to produce a similar $\traj$. Let $\histspace_t$ denote the space of $t$-length state-action histories, 
and a policy $\policy_t : \intentspace \times \histspace_t \to \actionspace$. With a slight abuse of notation, we denote by $\policy(\intent) \in \trajspacea$ the action trajectory that is obtained when applying $\pi_t$ sequentially for $T$ time steps (i.e., a rollout).
Similarly to the problem in Section \ref{sec:iterative_inversion}, our goal is to learn a policy such that 
$
    \policy(\embed(\trajstate)) = \mathcal{F}^{-1}(\trajstate).
$
More specifically, let $\loss : \trajspacea \times \trajspacea \to \mathbb{R}$ be a loss function between action trajectories, and let $P(\trajstate)$ denote a distribution over desired state trajectories, we seek a policy $\pi_\theta$ parameterized by $\theta \in \Theta$ that minimizes the average loss:
\begin{equation}\label{eq:control_loss}
    \min_{\theta \in \Theta} \mathbb{E}_{\trajstate \sim P}\left[ \loss\left(\policy_{\theta}(\embed(\trajstate)), \mathcal{F}^{-1}(\trajstate)\right) \right].
    \vspace{-0.5em}
\end{equation}
In our approach we assume that $P(\trajstate)$ is not known, but we are given a set $\steeringbuffer$ of $M$ intents, $\intent_1,\dots, \intent_M$, where $\intent_i = \embed(\trajstate^i)$, and $\trajstate^i$ are drawn i.i.d.~from $P(\trajstate)$. Henceforth, we will refer to $\steeringbuffer$ as the \textit{steering dataset}, as it should act to steer the learning of the inverse mapping towards the desired trajectory distribution $P(\trajstate)$. 

It is worth relating the problem above to the general inverse problem in Section \ref{sec:iterative_inversion}, and the distribution shift in Definition \ref{def:dist_shift}. Initially, the policy is not expected to be able to produce state-action trajectories that match the state trajectories in $\steeringbuffer$, but only trajectories that are output by the initial (typically random) policy. While these initial trajectories could be used as data for imitation learning, yielding an intent-conditioned policy, there is no reason to expect that this policy will be any good for intents in $\steeringbuffer$, which are out-of-distribution with respect to this training data.

We now propose a method for solving Problem \eqref{eq:control_loss} based on iterative inversion, as detailed in Algorithm \ref{alg:ESI}. There are four notable differences from the iterative inversion method in Algorithm \ref{alg:II}. First, we operate on batches of size $N$ instead of on the whole steering data (of size $M$), for computational efficiency. Second, we sample a batch of intents from a mixture of the steering dataset and the intents calculated for rollouts in the previous iteration. We found that this helps stabilize the algorithm. Third, we add random exploration noise to the policy when performing the rollouts, which we found to be necessary (see Sec. \ref{ssec:steering_evaluation}). Fourth, we used a replay buffer for the supervised learning part of the algorithm, also for improved stability.
For $\loss$, we used the MSE between action trajectories,
and for the optimization in line 7, %
we perform several epochs of gradient-based optimization using Adam~\citep{kingma2014adam}, keeping the state history input to $\policy_{\theta}(\hat{\intent})$ fixed as $\trajstate$ when computing the gradient. The size of the replay buffer was set to $K\times N$.

\begin{algorithm}[]
\caption{\methodnamefull\ for Learning Control}
\label{alg:ESI}
\begin{algorithmic}[1]
    \REQUIRE Steering data $\steeringbuffer$, exploration noise parameter $\noise$, steering ratio $\steerratio\in [0,1]$, batch size $N$
    \STATE Initialize $\prevbuffer = \steeringbuffer$, $\theta_0$ arbitrary
    \FOR{ $n=0,1,2,\dots$}
        \STATE Sample $\steerratio N$ intents from $\steeringbuffer$ and $(1-\steerratio) N$ intents from $\prevbuffer$, yielding $z^1,\dots,z^N$ 
        \STATE Perform $N$ rollouts $\traj^1,\dots,\traj^N$ using policy $\policy_{\theta_n}$ with input intents $z^1,\dots,z^N$, adding exploration noise $\noise$
        \STATE \alglinelabel{line:relabel} Compute intents for the rollouts $\hat{\intent}^i = \embed (\trajstate^i)$, $i\in1,\dots,N$ 
        \STATE Add intents, trajectories $\{\hat{\intent}^i, \traj^i\}$ to Replay Buffer
        \STATE \alglinelabel{line:sl} Train $\policy_{\theta_{n+1}}$ by supervised learning: $\theta_{n+1} = \argmin_{\theta \in \Theta} \sum_{\{\hat{\intent},\traj\}\in \textrm{Replay Buffer}}\left[ \loss\left(\policy_{\theta}(\hat{\intent}), \trajaction\right) \right]$ 
        \STATE Set $\prevbuffer = \left\{ \hat{\intent}^i \right\}_{i=1}^{N} $
    \ENDFOR
\end{algorithmic}
\end{algorithm}

The astute reader may notice that while the desired intents $\steeringbuffer$ are used as input to the data collection policy, they are actually \textit{detached} from the training loss throughout learning, by the relabelling in line 5. %
What, then, drives the learning closer to $\steeringbuffer$? Our analysis in Section \ref{sec:iterative_inversion} explains how, under suitable conditions of $\mathcal{F}$, the policy can in fact be steered appropriately by such an algorithm.

\begin{remark}
Note that a small loss over the actions, per Eq.~\eqref{eq:control_loss}, does not necessarily imply a similar trajectory to the reference. This is since (1) errors accumulate over time (generally, the error in the states can grow linearly with $T$; cf. Assumption~\ref{ass:d-bounds}), and (2) a possible distribution shift, which grows with $T$, between the states used to train the policy (Line 7) %
and states that would be encountered during a rollout from the trained policy when given the same intent, and can lead to state errors that grow quadratically with $T$~\cite{ross2011reduction}. This is a different distribution shift from our Definition \ref{def:dist_shift}, which concerns the difference between the rollouts of the initial policy and the desired outcomes and holds also for $T=1$. While steering is not expected to fix error accumulation, in practice, we found that Algorithm \ref{alg:ESI} did lead to accurate trajectories, even for systems where a small mistake can be dramatic. We attribute this to the training of a policy to imitate many noisy trajectories, which stabilizes the learned control.
\end{remark}

Note the simplicity of the \methodname\ algorithm -- it only involves exploration and supervised learning; there are no rewards, and the loss function is routine. In Section \ref{sec:experiments_top}, we provide empirical evidence that, perhaps surprisingly -- given the strict conditions for convergence of iterative inversion -- \methodname\ yields well-performing policies on nontrivial tasks. 

\section{Related Work}

Iterative inversion is similar to the \textit{breeder} algorithm~\citep{nair2008analysis} for inverting black-box functions, but with one crucial difference -- breeder starts from an \textit{in distribution} input-output pair, termed a \textit{prototype code vector}, and grows input-output pairs around it, to \textit{avoid} distribution shift. Our method does not require \textit{any} in distribution samples, and  we show how distribution shift can be overcome by steering.

Goal conditioned supervised learning (GCSL,~\citealt{ghosh2019learning}) is essentially a special case of iterative inversion where intents are chosen to be goal observations. Our contribution, however, is investigating the \textit{steering} component of the method -- an idea that originated in the algorithm of \citet{ghosh2019learning}, but was not investigated at all to our knowledge. \citet{ghosh2019learning}'s theory assumes no distribution shift (coverage of all goals in the data), and the idea that steering can overcome distribution shift is, to the best of our knowledge, novel. In addition, we show that our approach can learn to track trajectories accurately, which is important for tasks where the \textit{whole trajectory is important}, and not just the goal.

Learning inverse dynamics models is popular in robotics~\citep{nguyen2010using,calandra2015learning,meier2016towards,christiano2016transfer}, and typically requires the full state trajectory for predicting corresponding actions, and training data \textit{from the desired state-action distribution}; our approach requires an embedding of the desired trajectory, and our focus is on the setting of inversion distribution shift.
The only study we are aware of in this direction is \citet{hong2020adversarial}, where an RL agent is trained to steer data collection to areas where the inverse model errs. However, \citet{hong2020adversarial} require the full state trajectory as policy input, require RL training as part of their method, and report difficulty in scaling to high dimensional action spaces, where their curiosity-based exploration is not effective enough to steer learning towards desired behavior. 
Recently, \citet{baker2022video} used a transformer to learn an inverse model conditioned on video, but they collected human-labelled data to train their model on desired behavior trajectories. Our approach is self-supervised.

In learning from demonstrations~\citep{argall2009survey}, data typically contains both states and actions, enabling direct supervised learning, either by behavioral cloning~\citep{pomerleau1988alvinn} or interactive methods such as DAgger~\citep{ross2011reduction}.
Inverse RL (IRL) can learn from demonstrations without actions, and methods such as apprenticeship learning~\citep{abbeel2004apprenticeship} or generative adversarial imitation learning~\citep{ho2016generative,peng2022ase}
simultaneously train a critic that discriminates between data trajectories and policy rollouts (a classification problem), and a policy that confuses the critic as best as possible (an RL problem). 
Our emphasis is on learning a policy that can reconstruct \textit{diverse} behaviors, conditioned on an intent -- different from most IRL studies that consider a \textit{single task}~~\citep{ho2016generative,edwards2019imitating}, or unconditional generation~\cite{peng2022ase}.
While \citet{fu2019language,ding2019goal} considered a goal-conditioned IRL setting, we are not aware of IRL methods that can be conditioned on a more expressive description than a target goal state, such as a complete trajectory embedding, as we explore here. Our approach also avoids the need for training a critic or an RL agent in an inner loop.

In self-supervised RL, the agent does not receive a reward and uses its own experience to explore the environment by training a goal-conditioned policy and proposing novel goals~\citep{pathak2018zero,ecoffet2019go,hazan2019provably,sekar2020planning,mendonca2021discovering}. The space of all trajectories is much larger than the space of all states, and we are not aware of methods that demonstrably explore such a space. For this reason, in our approach we steer the exploration towards a set of desired trajectories.

\section{Experiments}
\label{sec:experiments_top}

\begin{figure*}
    \centering
    \includegraphics[width=\textwidth,keepaspectratio]{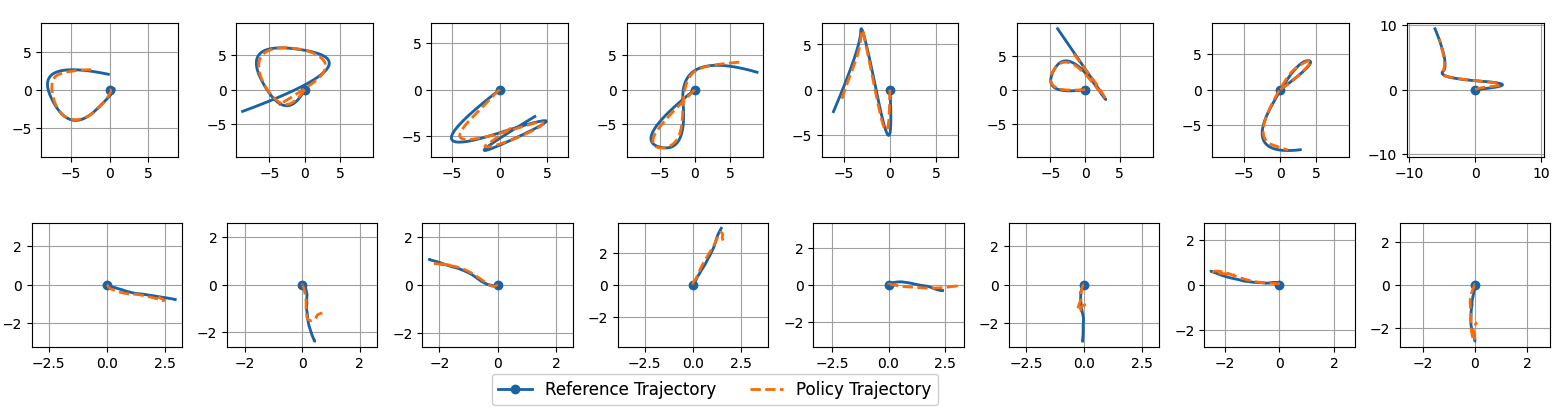}
    \vspace{-1em}
    \caption{Particle results on \texttt{Splines} (top) and \texttt{Deceleration} (bottom). Here $T=64$ and $|\steeringbuffer|=500$. All trajectories start at \texttt{(0,0)}, marked by a blue circle. In \texttt{Deceleration}, the particle quickly decelerates to a stop at $t=32$ -- note the small overshoot at the end of each reconstructed trajectory, due to imperfect reconstruction of stopping in place.}
    \label{fig:particle_all}
\end{figure*}
\begin{figure*}
    \centering
    \vspace{-1em}
    \includegraphics[width=\textwidth,keepaspectratio]{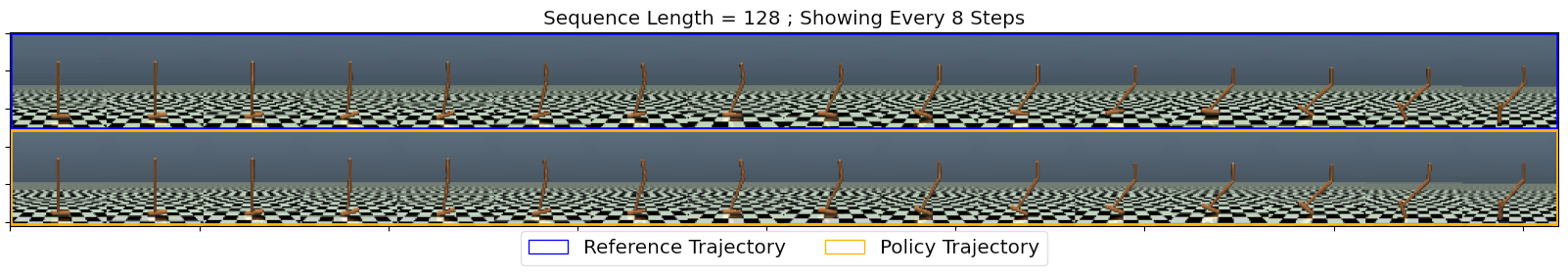}
    \vspace{-2em}
    \caption{
    Trajectory reconstructions in \texttt{Hopper-v2}, with $T=128$ and $|\steeringbuffer|=500$. Additional rollouts are presented in Appendix~\ref{app:results_hopper} and in the supporting video results. 
    }
    \label{fig:hopper_128}
    \vspace{-1em}
\end{figure*}

In this section, we evaluate \methodname\ on several domains. Our investigation is aimed at studying the unique features of \methodname\, and especially, the \textit{steering} behavior that we expect to observe.
We start by describing our evaluation domains, and implementation details that are common to all our experiments. We then present a series of experiments aimed at answering specific questions about \methodname. To appreciate the learned behavior, we encourage the reader to view our supporting video results at the project website \footnote{\label{foot:website}\url{https://sites.google.com/view/iter-inver}}.

\subsection*{Common Settings:} 
\textbf{VQ-VAE Intents:} For all our experiments, we generate intents using a VQ-VAE embedding of a rendered video of the trajectory. Rendering settings are provided next for each environment. We use VideoGPT's VQ-VAE implementation~\citep{yan2021videogpt}. An input video of size $64 \times 64 \times T$ (w, h, t) is encoded into a $16 \times 16 \times T/4$ integer intent ${\intent}^i$ given a codebook of size $50$. Each integer represents a float vector of length $4$. The training of the VQ-VAE is not the focus of this work, and we detail the training data for each VQ-VAE separately for each domain in the supplementary material. We remark that by visually inspecting the reconstruction quality, we found that our VQ-VAEs generalized well to the trajectories seen during learning.

\textbf{GPT-based policies and exploration noise} The policy architecture is adapted from VideoGPT \citep{yan2021videogpt}, and consists of 8 layers, 4 heads and a hidden dimension of size 64. The model is conditioned on the intent via cross-attention. In the supplementary material, we report similar results with a GRU-based policy. Our exploration noise adds a Gaussian noise of scale $\noise$ to the action output.

\textbf{Evaluation Protocol}: 
While our algorithm only uses a loss on actions, a loss on the resulting trajectories is often easier to interpret for measuring performance. We measure the sum of Euclidean distances between agent state variables, accumulated over time, as a proxy for trajectory similarity; in our results, this measure is denoted as MSE. 
Except when explicitly noted otherwise, all our results are evaluated on test trajectories (and corresponding intents) that were not in the steering data, but were generated from the same trajectory distribution. None of the trajectories we plot or our video results are cherry picked.

\subsection*{Domains}
\textbf{2D Particle:}
\label{ssec:domains_particle}
A particle robot is moved on a friction-less 2D plane, by applying a force $F=[F_X,F_Y]$ for a duration of $\Delta t$.
The observation space includes particle positions and velocities $S=[X,Y,V_X,V_Y]$, and motion videos are rendered using Matplotlib Animation~\citep{Hunter:2007}. 

While relatively simple for control, this environment allows for distinct and diverse behaviors that are easy to visualize. We experiment with 2 behavior classes, for which we procedurally created training trajectories: (1) \texttt{Spline} motion, and (2) \texttt{Deceleration} motion. Both require highly coordinated actions, and are very different from the motion that a randomly initialized policy induces. Full details about the datasets are described in Appendix~\ref{particle:datasets}. 

\textbf{Reacher:}
A 2-DoF robotic arm from OpenAI Gym's Mujoco Reacher-v2 environment~\citep{1606.01540}. While usually in Reacher-v2 the agent is rewarded for reaching a randomly generated target, the goal in our setting is for the policy to reconstruct the whole arm motion, as given by the intent. The intent is encoded from a video of the motion rendered using Mujoco~\citep{todorov2012mujoco}. We handcrafted a trajectory dataset, termed \texttt{FixedJoint}, which is fully described in Appendix~\ref{reacher:datasets}.

\textbf{Hopper:}
From OpenAI Gym's Mujoco Hopper-v2 environment~\citep{1606.01540}. The dataset is from D4RL's \texttt{hopper-medium-v2}~\citep{fu2020d4rl}, and consists of mostly forward hopping behaviors (see Appendix~\ref{hopper:datasets}). There are several challenges in this domain: (1) the dynamics are non-linear, and include a non-smooth contact with the ground; (2) the desired behavior (hopping) is very different from the behavior of an untrained policy (falling), and requires applying a very specific force exactly when making contact with the ground (a `bottleneck' in state space); and (3) the camera is fixed on the agent, and forward movement can only be inferred from the movement of the background.

\subsection*{Steering Evaluation}
\label{ssec:steering_evaluation}

The first question we investigate is whether \methodname\ indeed steers learning towards the desired behavior. To answer this, we consider domains where the desired behavior is \textit{very different} from the behavior of the initial random policy -- the \texttt{Spline} and \texttt{Deceleration} motions for the particle, and the hopping behavior for \texttt{Hopper-v2}. As we show in Figure~\ref{fig:particle_all} (for particle), 
and Figure~\ref{fig:hopper_128} (for \texttt{Hopper-v2}), \methodname\ produces a policy that can track the desired behavior with high accuracy. Videos of rollouts in the \texttt{Hopper-v2} environment\cref{foot:website} demonstrate that the policy is able to \textit{accurately reconstruct different hopping motions}, based only on their encoded intents. To the best of our knowledge, such accurate motion control, with only an action-less video encoding as input, on a non-trivial dynamical system has not been demonstrated before.
We further show, in Figure~\ref{fig:particle_splines_multi_lengths_grid} and Figure~\ref{fig:splines_multi_length_strips} in the supplementary material, that \methodname\ works well for different trajectory lengths $T$.

Another question is whether \methodname\  really steers the policy towards the desired trajectories, or improves general properties of the policy, allowing a generally better reconstruction. We explore this question by a \textit{cross-evaluation} -- evaluating the performance of a policy trained with steering intents from \texttt{Particle:Splines} on test intents from \texttt{Particle:Deceleration}, which we will refer to as out-of-distribution intents, and vice versa. Interestingly, as Table~\ref{table:steering}
shows, performance on out-of-distribution intents is significantly worse than the performance that would have been obtained by training the policy with these intents as the steering dataset, and is even worse or comparable to training with no steering at all (cf. Table~\ref{table:MSEs}). Example rollouts from this experiment are shown in \ref{app:test_on_ood}.

\subsection*{Evaluation of Exploration Noise}
We also evaluated the importance of the exploration noise.
We tested \texttt{Splines} with $T=64$ and \texttt{Hopper-v2} with $T=128$ with and without exploration noise, and a large $\steeringbuffer$ (1740 for Hopper-v2, 500 for Particle).
As the results in Table~\ref{table:test_exploration} in the supplementary material show, exploration noise $\noise$ is crucial for the training procedure to converge towards the desired behavior. 
We believe that exploration improves the conditioning of the supervised learning problem, and helps produce a policy that stabilizes control along the desired trajectory.

\begin{table}[]
    \centering
    \caption{Steering cross-evaluation. See Appendix~\ref{app:test_on_ood} for corresponding trajectory visualizations. In all cases $|\steeringbuffer|=500$.}
    \label{table:steering}
    \begin{tabular}{ccc}
        \toprule
        Test Dataset & Steering Dataset & MSE \\
        \midrule
        \multirow{2}{*}{\texttt{Splines}}      & \texttt{Splines}      & 69.2  \\
                                               & \texttt{Deceleration} & 210.9 \\
        \midrule
        \multirow{2}{*}{\texttt{Deceleration}} & \texttt{Splines}      & 28.1  \\
                                               & \texttt{Deceleration} & 18.5  \\
        \bottomrule
    \end{tabular}
\end{table}

\subsection*{Steering Dataset Size and Generalization}
\label{ssec:steering_size}
We next evaluate the generalization performance of \methodname\ to intents that were not seen in the data, but correspond to state trajectories drawn from $P(\trajstate)$. To investigate this, we consider a domain where the desired behavior is \textit{very diverse} -- the \texttt{Spline} motions for the particle. 
We also report results on domains where the behavior is less diverse, such as \texttt{Hopper-v2} and \texttt{Deceleration} motions for particle.
Naturally, we expect generalization to correlate with $M$, the size of $\steeringbuffer$. As our results in Table~\ref{table:MSEs} show, additional steering data indeed improves generalization to unseen trajectories, albeit with diminishing returns as the amount of steering data is increased. As expected, in the more diverse distribution there was more gain to reap from additional data (significant improvement up to $|\steeringbuffer|=50$), compared with the less diverse domain (most of the improvement is achieved already with $|\steeringbuffer|=10$).
Trajectory visualizations for \texttt{Splines} with different sizes of $\steeringbuffer$ are shown in Appendix~\ref{app:results_splines_steering_size}.

\begin{table*}[]
    \caption{
    Steering Dataset Size and Generalization. Here $T=64$, and we show MSE averaged over 3 random seeds.
    Note that $|\steeringbuffer|=0$ represents the case where no steering is used at all. In this case, we use trajectories sampled from a random policy to initialize $|\prevbuffer|$ (see Algorithm~\ref{alg:ESI}). (*)~For \texttt{Hopper-v2}, the maximal $|\steeringbuffer|$ is $1740$ due to a limited amount of data in D4RL.  
    }
    \vspace{-0.5em}
    \label{table:MSEs}
    \resizebox{\textwidth}{!}{%
    \begin{tabular}{lcccccc}
        \toprule
        \multicolumn{1}{c}{}  & $|\steeringbuffer|$ = 0 & $|\steeringbuffer|$ = 10 & $|\steeringbuffer|$ = 50 & $|\steeringbuffer|$ = 100 & $|\steeringbuffer|$ = 500 & $|\steeringbuffer|$ = 2000 $^*$\\
        \midrule
        \texttt{Particle:Splines}       & 199.7 & 105.4 & 75.8 & 72.7 & 69.2 & 66.9 \\
        \texttt{Particle:Deceleration}  & 30.0 & 20.5 & 21.1 & 20.2 & 17.9 & 18.6 \\
        \texttt{Hopper-v2}          & 173.3 & 68.2 & 67.2 & 64.9 & 67.0 & 63.0 \\
        \bottomrule
    \end{tabular}%
    }
\end{table*}

\subsection*{Comparison with RL and IRL Baselines}
\label{ssec:rl_baselines}

In essence, \methodname\ performs imitation learning from observations. Thus, a natural comparison is with methods such as GAIL from observations~\cite{torabi2018generative}. Previous IRL literature focused on learning only a single task, and measured accuracy as the task success (e.g., the success of hopper hopping). However, the setting where \methodname\ shines, and which we focus our evaluation on, is where the policy must be able to reconstruct a \textit{diverse} set of behaviors (from their corresponding intents). Thus, our evaluation is not whether a single task succeeds, but how accurately any desired trajectory is tracked (e.g., hopping in a very specific motion). The multi-task setting, where the agent needs to perform many different behaviors specified by different intents, has not been studied to the best of our knowledge.

We compare \methodname\ with GAIL from observations~\cite{torabi2018generative} in the \texttt{Particle:Splines} environment. As we consider the multi-task setting, we also add comparison with RL baselines where we manually set a relevant reward function. These RL baselines serve to show that any advantage of \methodname\ over IRL is not due to some implementation detail (both use the same model architecture), but due to the difficulty of its RL component to learn in this multi-intent setting. 
We consider two reward functions tailored to the \texttt{Particle:Splines} environment: (1) \texttt{STATE-MSE}: MSE between desired position and current position, and (2) \texttt{INTENT-MSE}: a sparse reward that is the MSE between the intents of the desired trajectory and the executed trajectory, given at the end of the episode. \texttt{STATE-MSE} is privileged compared to \methodname\ and is arguably stronger than any IRL method in this task, as the reward is dense, and exactly captures the desired behavior.
Any IRL method will run RL in an inner loop, with a reward that is less precise. \texttt{INTENT-MSE} is motivated by the fact that \methodname\ effectively learns some similarity measure in intent space, and this reward captures this idea explicitly.

We used exactly the same policy architecture for all comparisons. We found that both RL and IRL methods did not train well with the GPT-based policy architecture\footnote{Difficulty of RL with transformers was discussed in \citep{parisotto2020stabilizing,hausknecht2022consistent}.}, therefore we report results for the GRU policy (also for \methodname), which is described in detail in Appendix~\ref{app:gru}.
We used PPO~\citep{schulman2017proximal} for RL training, based on the implementation of \citet{pytorchrl}. Our GAIL implementation is also based on \citet{pytorchrl}, with modifications to follow the setup of \citet{torabi2018generative} and to add the intent as a context input. Additional details are in Appendices~\ref{app:rl} and~\ref{app:gail}.

\begin{figure}[]
    \centering
    \includegraphics[width=\columnwidth,keepaspectratio]{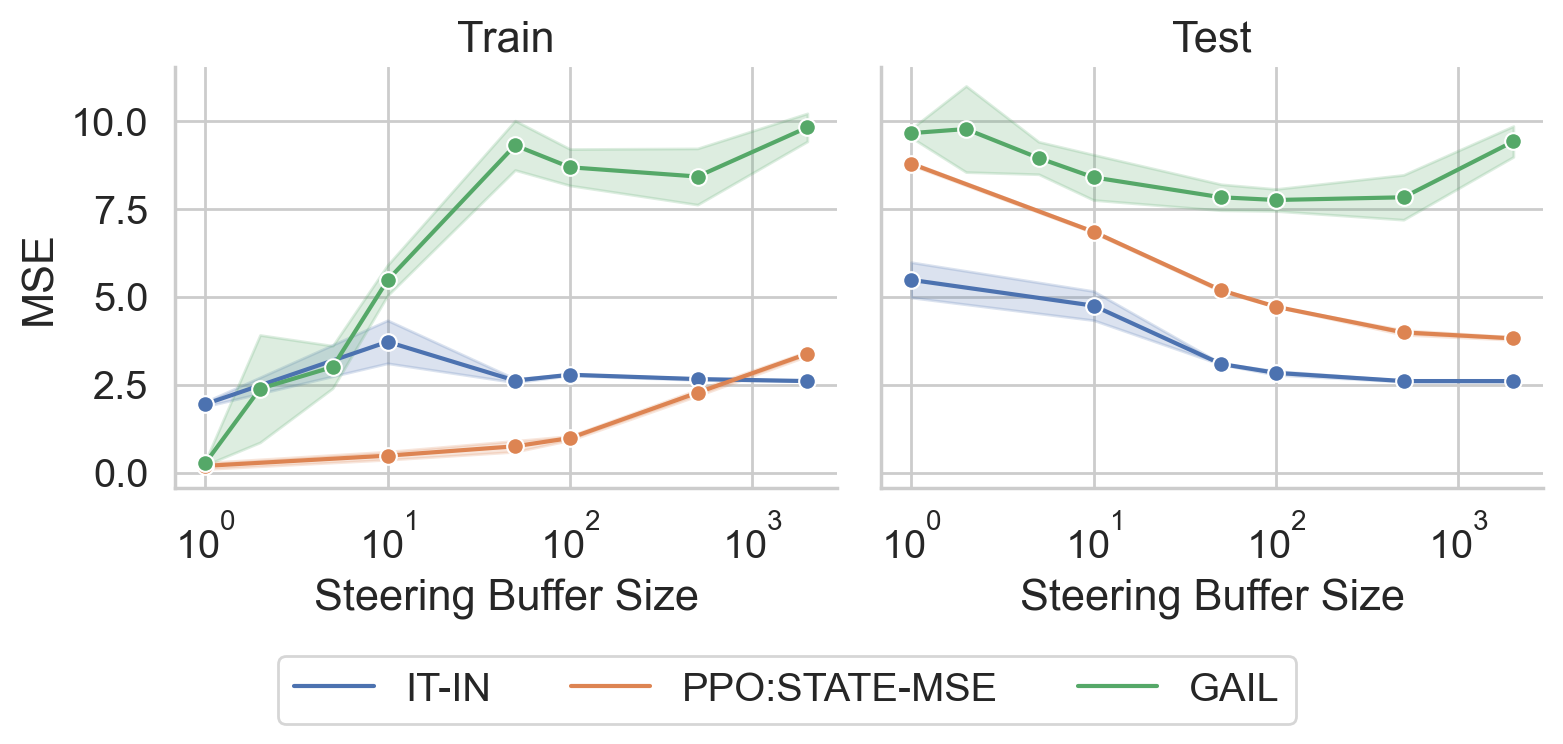}
    \vspace{-1em}
    \caption{Comparison of \methodname\ with PPO and GAIL baselines, in the \texttt{Particle:Splines} environment with $T=16$. All results are MSE (lower is better), each represented with a mean and standard deviation of 3 random seeds. Note that \methodname\ outperforms baselines on test trajectories (graph on the right) for all $\steeringbuffer$ sizes. For GAIL, $\steeringbuffer$ is also used as the expert data for the discriminator. Additional results are shown in Figure~\ref{fig:rl_appendix}. }
    \label{fig:rl}
    \vspace{-0.5em}
\end{figure}

In Figure~\ref{fig:rl}, we report results both on a held-out test set of trajectories, and on the training trajectories. As expected, for a small $\steeringbuffer$, \texttt{STATE-MSE} obtains near perfect reconstruction of training trajectories, yet high error on test trajectories, as the precise reward makes it easy for PPO to overfit. Interestingly, however, when increasing the size of $\steeringbuffer$, it becomes more difficult to overfit with PPO, even with the \texttt{STATE-MSE} reward. This highlights a difficulty of RL in the multi-task setting: note that for $|\steeringbuffer|=2000$, the performance of \texttt{STATE-MSE} on training is worse than the performance of \methodname\ on test! 
Our results suggest that vanilla PPO is not well suited to training policies conditioned on very diverse contexts (Our context here is a vector of length 4096). We mention that the recent related work of \citet{peng2022ase} trained context embeddings together with RL, which may explain their success in learning diverse skills. 
Importantly, on test data, \methodname\ significantly outperforms both RL methods for all $\steeringbuffer$ sizes, even though \methodname\ \textit{does not} use the privileged information in the reward. We attribute this finding to the combination of stable supervised learning updates, and not relying on a reward. 
Our results for \texttt{INTENT-MSE} do not come close to \methodname, which we attribute to the more difficult learning from sparse reward (see Appendix~\ref{app:rl_irl_baselines}). 
Finally, GAIL was outperformed by both \texttt{STATE-MSE} RL and \methodname\ (except for the single demonstration case -- the standard "single task" GAIL setup). As expected, GAIL was not able to find a better reward than \texttt{STATE-MSE} here.
Additional baseline-comparison results are presented in Appendix~\ref{app:rl_irl_baselines}.

\section{Discussion}
We presented a new formulation for learning control, based on an inverse problem approach, and demonstrated its application to learning deep neural network policies that can reconstruct diverse behaviors, given an embedding of the desired trajectory. 
We developed the fundamental theory underlying iterative inversion, and demonstrated promising results on several simple tasks. We also found that for very diverse behaviors, our formulation learns and generalizes more effectively than RL or IRL approaches, which we attribute to the stable supervised-learning method at its core.

We only considered a particular trajectory embedding based on an off-the-shelf VQ-VAE, which we found to be general and practical. Important questions for future work include characterizing the effect of the embedding on performance, and training an embedding jointly with the policy. Additionally, the exploration noise, which we found to be important, can potentially be replaced with more advanced exploration strategies from the RL literature.

Another question is how to generate intents from a partial description of a trajectory, such as a natural language description. Diffusion models, which have recently gained popularity for learning distributions over latent variables~\citep{rombach2021highresolution}, are one potential approach for this.

Remaining open questions include the gap between the strict conditions for convergence under a linear approximation in our theory and the stable performance we observed in practice with expressive policies and non-linear dynamics, and whether iterative inversion can be extended to non-deterministic systems. Our work provides the fundamentals for further investigating these important questions.

\section*{Acknowledgements}
This work received funding from the European Union (ERC, Bayes-RL, Project Number 101041250). Views and opinions expressed are however those of the author(s) only and do not necessarily reflect those of the European Union or the European Research Council Executive Agency. Neither the European Union nor the granting authority can be held responsible for them. 

\bibliography{sample}

\begin{thebibliography}{41}
\providecommand{\natexlab}[1]{#1}
\providecommand{\url}[1]{\texttt{#1}}
\expandafter\ifx\csname urlstyle\endcsname\relax
  \providecommand{\doi}[1]{doi: #1}\else
  \providecommand{\doi}{doi: \begingroup \urlstyle{rm}\Url}\fi

\bibitem[Abbeel \& Ng(2004)Abbeel and Ng]{abbeel2004apprenticeship}
Abbeel, P. and Ng, A.~Y.
\newblock Apprenticeship learning via inverse reinforcement learning.
\newblock In \emph{Proceedings of the twenty-first international conference on
  Machine learning}, pp.\ ~1, 2004.

\bibitem[Argall et~al.(2009)Argall, Chernova, Veloso, and
  Browning]{argall2009survey}
Argall, B.~D., Chernova, S., Veloso, M., and Browning, B.
\newblock A survey of robot learning from demonstration.
\newblock \emph{Robotics and autonomous systems}, 57\penalty0 (5):\penalty0
  469--483, 2009.

\bibitem[Baker et~al.(2022)Baker, Akkaya, Zhokhov, Huizinga, Tang, Ecoffet,
  Houghton, Sampedro, and Clune]{baker2022video}
Baker, B., Akkaya, I., Zhokhov, P., Huizinga, J., Tang, J., Ecoffet, A.,
  Houghton, B., Sampedro, R., and Clune, J.
\newblock Video pretraining (vpt): Learning to act by watching unlabeled online
  videos.
\newblock \emph{arXiv preprint arXiv:2206.11795}, 2022.

\bibitem[Bertsekas(1995)]{bertsekas1995dynamic}
Bertsekas, D.~P.
\newblock \emph{Dynamic programming and optimal control}, volume~1.
\newblock Athena Scientific, 1995.

\bibitem[Brockman et~al.(2016)Brockman, Cheung, Pettersson, Schneider,
  Schulman, Tang, and Zaremba]{1606.01540}
Brockman, G., Cheung, V., Pettersson, L., Schneider, J., Schulman, J., Tang,
  J., and Zaremba, W.
\newblock Openai gym, 2016.

\bibitem[Calandra et~al.(2015)Calandra, Ivaldi, Deisenroth, Rueckert, and
  Peters]{calandra2015learning}
Calandra, R., Ivaldi, S., Deisenroth, M.~P., Rueckert, E., and Peters, J.
\newblock Learning inverse dynamics models with contacts.
\newblock In \emph{2015 IEEE International Conference on Robotics and
  Automation (ICRA)}, pp.\  3186--3191. IEEE, 2015.

\bibitem[Christiano et~al.(2016)Christiano, Shah, Mordatch, Schneider,
  Blackwell, Tobin, Abbeel, and Zaremba]{christiano2016transfer}
Christiano, P., Shah, Z., Mordatch, I., Schneider, J., Blackwell, T., Tobin,
  J., Abbeel, P., and Zaremba, W.
\newblock Transfer from simulation to real world through learning deep inverse
  dynamics model.
\newblock \emph{arXiv preprint arXiv:1610.03518}, 2016.

\bibitem[Ding et~al.(2019)Ding, Florensa, Abbeel, and Phielipp]{ding2019goal}
Ding, Y., Florensa, C., Abbeel, P., and Phielipp, M.
\newblock Goal-conditioned imitation learning.
\newblock \emph{Advances in neural information processing systems}, 32, 2019.

\bibitem[Ecoffet et~al.(2019)Ecoffet, Huizinga, Lehman, Stanley, and
  Clune]{ecoffet2019go}
Ecoffet, A., Huizinga, J., Lehman, J., Stanley, K.~O., and Clune, J.
\newblock Go-explore: a new approach for hard-exploration problems.
\newblock \emph{arXiv preprint arXiv:1901.10995}, 2019.

\bibitem[Edwards et~al.(2019)Edwards, Sahni, Schroecker, and
  Isbell]{edwards2019imitating}
Edwards, A., Sahni, H., Schroecker, Y., and Isbell, C.
\newblock Imitating latent policies from observation.
\newblock In \emph{International conference on machine learning}, pp.\
  1755--1763. PMLR, 2019.

\bibitem[Fu et~al.(2019)Fu, Korattikara, Levine, and
  Guadarrama]{fu2019language}
Fu, J., Korattikara, A., Levine, S., and Guadarrama, S.
\newblock From language to goals: Inverse reinforcement learning for
  vision-based instruction following.
\newblock \emph{arXiv preprint arXiv:1902.07742}, 2019.

\bibitem[Fu et~al.(2020)Fu, Kumar, Nachum, Tucker, and Levine]{fu2020d4rl}
Fu, J., Kumar, A., Nachum, O., Tucker, G., and Levine, S.
\newblock D4rl: Datasets for deep data-driven reinforcement learning, 2020.

\bibitem[Ghosh et~al.(2019)Ghosh, Gupta, Reddy, Fu, Devin, Eysenbach, and
  Levine]{ghosh2019learning}
Ghosh, D., Gupta, A., Reddy, A., Fu, J., Devin, C., Eysenbach, B., and Levine,
  S.
\newblock Learning to reach goals via iterated supervised learning.
\newblock \emph{arXiv preprint arXiv:1912.06088}, 2019.

\bibitem[Hausknecht \& Wagener(2022)Hausknecht and
  Wagener]{hausknecht2022consistent}
Hausknecht, M. and Wagener, N.
\newblock Consistent dropout for policy gradient reinforcement learning.
\newblock \emph{arXiv preprint arXiv:2202.11818}, 2022.

\bibitem[Hazan et~al.(2019)Hazan, Kakade, Singh, and
  Van~Soest]{hazan2019provably}
Hazan, E., Kakade, S., Singh, K., and Van~Soest, A.
\newblock Provably efficient maximum entropy exploration.
\newblock In \emph{International Conference on Machine Learning}, pp.\
  2681--2691. PMLR, 2019.

\bibitem[Ho \& Ermon(2016)Ho and Ermon]{ho2016generative}
Ho, J. and Ermon, S.
\newblock Generative adversarial imitation learning.
\newblock \emph{Advances in neural information processing systems}, 29, 2016.

\bibitem[Hong et~al.(2020)Hong, Fu, Shann, and Lee]{hong2020adversarial}
Hong, Z.-W., Fu, T.-J., Shann, T.-Y., and Lee, C.-Y.
\newblock Adversarial active exploration for inverse dynamics model learning.
\newblock In \emph{Conference on Robot Learning}, pp.\  552--565. PMLR, 2020.

\bibitem[Horn \& Johnson(2012)Horn and Johnson]{matrixanalysisbook}
Horn, R.~A. and Johnson, C.~R.
\newblock \emph{Matrix analysis}.
\newblock Cambridge university press, 2012.

\bibitem[Hunter(2007)]{Hunter:2007}
Hunter, J.~D.
\newblock Matplotlib: A 2d graphics environment.
\newblock \emph{Computing in Science \& Engineering}, 9\penalty0 (3):\penalty0
  90--95, 2007.
\newblock \doi{10.1109/MCSE.2007.55}.

\bibitem[Kingma \& Ba(2014)Kingma and Ba]{kingma2014adam}
Kingma, D.~P. and Ba, J.
\newblock Adam: A method for stochastic optimization.
\newblock \emph{arXiv preprint arXiv:1412.6980}, 2014.

\bibitem[Kostrikov(2018)]{pytorchrl}
Kostrikov, I.
\newblock Pytorch implementations of reinforcement learning algorithms.
\newblock \url{https://github.com/ikostrikov/pytorch-a2c-ppo-acktr-gail}, 2018.

\bibitem[Lillicrap et~al.(2015)Lillicrap, Hunt, Pritzel, Heess, Erez, Tassa,
  Silver, and Wierstra]{lillicrap2015continuous}
Lillicrap, T.~P., Hunt, J.~J., Pritzel, A., Heess, N., Erez, T., Tassa, Y.,
  Silver, D., and Wierstra, D.
\newblock Continuous control with deep reinforcement learning.
\newblock \emph{arXiv preprint arXiv:1509.02971}, 2015.

\bibitem[Meier et~al.(2016)Meier, Kappler, Ratliff, and
  Schaal]{meier2016towards}
Meier, F., Kappler, D., Ratliff, N., and Schaal, S.
\newblock Towards robust online inverse dynamics learning.
\newblock In \emph{2016 IEEE/RSJ International Conference on Intelligent Robots
  and Systems (IROS)}, pp.\  4034--4039. IEEE, 2016.

\bibitem[Mendonca et~al.(2021)Mendonca, Rybkin, Daniilidis, Hafner, and
  Pathak]{mendonca2021discovering}
Mendonca, R., Rybkin, O., Daniilidis, K., Hafner, D., and Pathak, D.
\newblock Discovering and achieving goals via world models.
\newblock \emph{arXiv preprint arXiv:2110.09514}, 2021.

\bibitem[Mnih et~al.(2015)Mnih, Kavukcuoglu, Silver, Rusu, Veness, Bellemare,
  Graves, Riedmiller, Fidjeland, Ostrovski, et~al.]{mnih2015human}
Mnih, V., Kavukcuoglu, K., Silver, D., Rusu, A.~A., Veness, J., Bellemare,
  M.~G., Graves, A., Riedmiller, M., Fidjeland, A.~K., Ostrovski, G., et~al.
\newblock Human-level control through deep reinforcement learning.
\newblock \emph{Nature}, 518\penalty0 (7540):\penalty0 529--533, 2015.

\bibitem[Nair et~al.(2008)Nair, Susskind, and Hinton]{nair2008analysis}
Nair, V., Susskind, J., and Hinton, G.~E.
\newblock Analysis-by-synthesis by learning to invert generative black boxes.
\newblock In \emph{International conference on artificial neural networks},
  pp.\  971--981. Springer, 2008.

\bibitem[Nguyen-Tuong \& Peters(2010)Nguyen-Tuong and Peters]{nguyen2010using}
Nguyen-Tuong, D. and Peters, J.
\newblock Using model knowledge for learning inverse dynamics.
\newblock In \emph{2010 IEEE international conference on robotics and
  automation}, pp.\  2677--2682. IEEE, 2010.

\bibitem[Ortega \& Rheinboldt(2000)Ortega and Rheinboldt]{ortega2000iterative}
Ortega, J.~M. and Rheinboldt, W.~C.
\newblock \emph{Iterative Solution of Nonlinear Equations in Several
  Variables}.
\newblock Society for Industrial and Applied Mathematics, 2000.
\newblock \doi{10.1137/1.9780898719468}.

\bibitem[Parisotto et~al.(2020)Parisotto, Song, Rae, Pascanu, Gulcehre,
  Jayakumar, Jaderberg, Kaufman, Clark, Noury,
  et~al.]{parisotto2020stabilizing}
Parisotto, E., Song, F., Rae, J., Pascanu, R., Gulcehre, C., Jayakumar, S.,
  Jaderberg, M., Kaufman, R.~L., Clark, A., Noury, S., et~al.
\newblock Stabilizing transformers for reinforcement learning.
\newblock In \emph{International conference on machine learning}, pp.\
  7487--7498. PMLR, 2020.

\bibitem[Pathak et~al.(2018)Pathak, Mahmoudieh, Luo, Agrawal, Chen, Shentu,
  Shelhamer, Malik, Efros, and Darrell]{pathak2018zero}
Pathak, D., Mahmoudieh, P., Luo, G., Agrawal, P., Chen, D., Shentu, Y.,
  Shelhamer, E., Malik, J., Efros, A.~A., and Darrell, T.
\newblock Zero-shot visual imitation.
\newblock In \emph{Proceedings of the IEEE conference on computer vision and
  pattern recognition workshops}, pp.\  2050--2053, 2018.

\bibitem[Peng et~al.(2022)Peng, Guo, Halper, Levine, and Fidler]{peng2022ase}
Peng, X.~B., Guo, Y., Halper, L., Levine, S., and Fidler, S.
\newblock Ase: Large-scale reusable adversarial skill embeddings for physically
  simulated characters.
\newblock \emph{ACM Trans. Graph.}, 41\penalty0 (4), July 2022.

\bibitem[Pomerleau(1988)]{pomerleau1988alvinn}
Pomerleau, D.~A.
\newblock Alvinn: An autonomous land vehicle in a neural network.
\newblock \emph{Advances in neural information processing systems}, 1, 1988.

\bibitem[Rombach et~al.(2021)Rombach, Blattmann, Lorenz, Esser, and
  Ommer]{rombach2021highresolution}
Rombach, R., Blattmann, A., Lorenz, D., Esser, P., and Ommer, B.
\newblock High-resolution image synthesis with latent diffusion models, 2021.

\bibitem[Ross et~al.(2011)Ross, Gordon, and Bagnell]{ross2011reduction}
Ross, S., Gordon, G., and Bagnell, D.
\newblock A reduction of imitation learning and structured prediction to
  no-regret online learning.
\newblock In \emph{Proceedings of the fourteenth international conference on
  artificial intelligence and statistics}, pp.\  627--635. JMLR Workshop and
  Conference Proceedings, 2011.

\bibitem[Schulman et~al.(2017)Schulman, Wolski, Dhariwal, Radford, and
  Klimov]{schulman2017proximal}
Schulman, J., Wolski, F., Dhariwal, P., Radford, A., and Klimov, O.
\newblock Proximal policy optimization algorithms.
\newblock \emph{arXiv preprint arXiv:1707.06347}, 2017.

\bibitem[Seber(2003)]{linearregbook2003}
Seber, G. A. F. G. A.~F.
\newblock \emph{Linear regression analysis George A.F. Seber, Alan J. Lee.}
\newblock Wiley series in probability and statistics. Wiley-Interscience,
  Hoboken, N.J, 2nd ed. edition, 2003.
\newblock ISBN 1-280-58916-7.

\bibitem[Sekar et~al.(2020)Sekar, Rybkin, Daniilidis, Abbeel, Hafner, and
  Pathak]{sekar2020planning}
Sekar, R., Rybkin, O., Daniilidis, K., Abbeel, P., Hafner, D., and Pathak, D.
\newblock Planning to explore via self-supervised world models.
\newblock In \emph{International Conference on Machine Learning}, pp.\
  8583--8592. PMLR, 2020.

\bibitem[Todorov et~al.(2012)Todorov, Erez, and Tassa]{todorov2012mujoco}
Todorov, E., Erez, T., and Tassa, Y.
\newblock Mujoco: A physics engine for model-based control.
\newblock In \emph{2012 IEEE/RSJ International Conference on Intelligent Robots
  and Systems}, pp.\  5026--5033. IEEE, 2012.

\bibitem[Torabi et~al.(2018)Torabi, Warnell, and Stone]{torabi2018generative}
Torabi, F., Warnell, G., and Stone, P.
\newblock Generative adversarial imitation from observation.
\newblock \emph{arXiv preprint arXiv:1807.06158}, 2018.

\bibitem[Van Den~Oord et~al.(2017)Van Den~Oord, Vinyals, et~al.]{van2017neural}
Van Den~Oord, A., Vinyals, O., et~al.
\newblock Neural discrete representation learning.
\newblock \emph{Advances in neural information processing systems}, 30, 2017.

\bibitem[Yan et~al.(2021)Yan, Zhang, Abbeel, and Srinivas]{yan2021videogpt}
Yan, W., Zhang, Y., Abbeel, P., and Srinivas, A.
\newblock Videogpt: Video generation using vq-vae and transformers, 2021.

\end{thebibliography}
\bibliographystyle{icml2023}

\newpage
\appendix
\onecolumn

\section{Proofs}
\subsection{Proof of Equation \ref{eq:update_rule}}\label{app:update_rule}
Throughout this and the rest of the theoretical proofs, with a slight abuse of notation, when a vector $u\in\mathbb{R}^N$ is added to a matrix $A\in\mathbb{R}^{M\times N}$, the addition is row-wise: $A+u\equiv A+\mathbf{1}u$ where $\mathbf{1}u=(u,\dots,u)^T\in\mathbb{R}^{M\times N}$.\\

We remind the reader of the notations defined in Section \ref{sec:iterative_inversion}. Denote $X^n\equiv(x^n_1,\dots,x^n_M)^T\in\mathbb{R}^{M \times dim(\mathcal{X})}$, $\F(X^n)\equiv(\F(x_1^n),\dots,\F(x_M^n))^T\in\mathbb{R}^{M \times dim(\mathcal{Y})}$ as the input and output matrices, $\overline{X^n}\equiv\sum_{i=1}^M x_i^n / M\in\mathbb{R}^{dim(\mathcal{X})}$ and $\overline{Y}\equiv\sum_{i=1}^M y_i / M$, $\overline{\F(X^n)}\equiv\sum_{i=1}^M \F(x_i^n) / M\in\mathbb{R}^{dim(\mathcal{Y})}$ as the current inputs, desired outputs, and current outputs means,  and $(\cdot)^\dagger$ the Moore-Penrose pseudoinverse operator. We also define $Y=(y_1,\dots,y_M)^T\in\mathbb{R}^{M\times dim(\mathcal{Y})}$. The approximated linear function is $\mathcal{G}_{\Theta,b}(Y)=Y\Theta + b$ where $\Theta\in\mathbb{R}^{dim(\mathcal{Y})\times dim(\mathcal{X})}$ and $b\in\mathbb{R}^{1 \times dim(\mathcal{X})}$ (note that we explicitly add the bias parameter $b$). At iteration $n$:
\begin{align*}
    \Theta_{n+1},b_{n+1}=\argmin_{\Theta, b} \| \mathcal{F}(X^n) \Theta + b - X^n \| ^2 .
\end{align*}
This is an ordinary linear least squares problem with the solution \citep[Section 3.11.1]{linearregbook2003}:
\begin{equation}\label{eq:theta_b}
\begin{aligned}
 \Theta_{n+1} &= (\mathcal{F}(X^n)-\overline{\mathcal{F}(X^n)})^{\dagger}(X^n-\overline{X^n}), \qquad\qquad&  b_{n+1} &= \overline{X^n}-\overline{\mathcal{F}(X^n)}\Theta_{n+1}.
\end{aligned}
\end{equation}
Then,
\begin{equation*}
X^{n+1} = \mathcal{G}_{\Theta_{n+1},b_{n+1}}(Y)=Y\Theta_{n+1}+b_{n+1}=\overline{X^n}+(Y-\overline{\mathcal{F}(X^n)})\Theta_{n+1},
\end{equation*}
and averaging over the points yields the result:
\begin{equation*}
\overline{X^{n+1}} =\overline{X^n}+(\overline{Y}-\overline{\mathcal{F}(X^n)})\Theta_{n+1}.
\end{equation*}

\subsection{Proof of Theorem \ref{the:linear_convergence}}
Using the notation defined in Appendix~\ref{app:update_rule}.\\ Assuming $\mathcal{F}(X)= X F+h$ is a linear function with $F\in\mathbb{R}^{dim(\mathcal{X})\times dim(\mathcal{Y})}$ and $h\in\mathbb{R}^{1 \times dim(\mathcal{Y})}$. \\
Assuming $rank(\mathcal{F}(X^0)-\overline{\mathcal{F}(X^0)})=dim(\mathcal{Y})$ then $(\mathcal{F}(X^0)-\overline{\mathcal{F}(X^0)})^T(\mathcal{F}(X^0)-\overline{\mathcal{F}(X^0)})$ is invertible and $\Theta_1$ defined on Equation \ref{eq:theta_b} is well defined. \\
Then using the fact that $\mathcal{F}(X^n)-\overline{\mathcal{F}(X^n)}=(X^n-\overline{X^n})F$:
\begin{align*}
    \Theta_1 = (\mathcal{F}(X^0)-\overline{\mathcal{F}(X^0)})^{\dagger}(X^0-\overline{X^0})=\left((X^0-\overline{X^0})F\right)^{\dagger}(X^0-\overline{X^0}),
\end{align*}
and satisfies
\begin{align*}
    \Theta_1 F = I,
\end{align*}
where $I$ is the identity matrix. The bias term, according to Equation \ref{eq:theta_b}, is 
\begin{align*}
    b_1=\overline{X^0}-\overline{\mathcal{F}(X^n)}\Theta_1=\overline{X^0}-(\overline{X^0}F+h)\Theta_1.
\end{align*}
At the end of iteration 1, $X^1=Y\Theta_1+b_1$ and its matching outputs equal to the desired outputs:
\begin{align*}
    \mathcal{F}(X^1) &=X^1 F + h\\
    &=Y\Theta_1 F + b_1 F + h\\
    &=Y + (\overline{X^0}-(\overline{X^0}F+h)\Theta_1) F + h\\
    &=Y + (\overline{X^0}F-\overline{X^0}F-h) + h\\
    &= Y.
\end{align*}

\subsection{Proof of Theorem \ref{the:multidim}}\label{app:multidim}

For clarity in our presentation, we will use the following notation: $J^{-1}_n\equiv J^{-1}(X^n)$, $J_n\equiv J(\overline{X^n})$, $\Tilde{\F}_n\equiv\overline{\F(X^n)}$ and $\F_n \equiv \F(\overline{X^n})$. We also define $H_n\equiv\F_n-\overline{Y}$ and $\Tilde{H}_n\equiv\Tilde{\F}_n-\overline{Y}$ 
 
First we show that $\Tilde{J}_n^{-1}$ is non-singular. Since $\delta < \frac{1}{\zeta\beta}$ then $\rho(J_n \Delta_n J_n^{-1}) \leq \| J_n \Delta_n J_n^{-1} \| \leq \delta \zeta \beta < 1$ where $\rho(A)$ denotes the spectral radius of $A$. The first inequality is proven in \citet[Thm. 5.6.9]{matrixanalysisbook} and the second inequality is a result of the sub-multiplicative property of the induced norm, $\|J_n \Delta_n J_n^{-1}\| \leq \|J_n\| \|\Delta_n\| \|J_n^{-1}\|$. Therefore $(I+J_n \Delta_n J_n^{-1})$ is non-singular, and $\Tilde{J}_n^{-1}=J_n^{-1}(I + J_n \Delta_n J_n^{-1})$ is non-singular as a multiplication of non-singular matrices. 

We denote by $\Tilde{J}_n\equiv\left(\Tilde{J}_n^{-1}\right)^{-1}$ its matrix inverse, and obtain the following bounds:
\begin{equation}\label{eq:h-bound}
 \| H_n - \Tilde{H}_n \| = \| \F_n - \Tilde{\F}_n \| \leq \lambda
\end{equation}
\begin{equation}\label{eq:j-1-approx-bound}
    \| \Tilde{J}^{-1}_n \| = \| (I + \Delta_n) J_n^{-1}  \| \overset{(*)}{\leq} \| J^{-1}_n \| (1 + \| \Delta_n \|) \leq \ \beta(1+\delta)
\end{equation}
\begin{equation}\label{eq:j-diff-bound}
    \| \Tilde{J}_n - J_n \| \overset{(**)}{\leq} \frac{\|J_n\|^2 \| \Delta_n J_n^{-1}\|}{1-\|J_n \Delta_n J_n^{-1} \|}
    \leq  \frac{\|J_n\|^2 \| \Delta_n \| \| J_n^{-1}\|}{1-\|J_n \| \| \Delta_n \| \| J_n^{-1} \|} \leq \frac{\zeta^2 \delta \beta}{1-\zeta \delta \beta} \equiv \mu
\end{equation}
\begin{equation}
    \| \Tilde{J}_n \| \leq \| \Tilde{J}_n - J_n \| + \| J_n \| \leq \mu + \zeta
\end{equation}
\begin{equation}\label{eq:x-diff-bound}
    \| \Tilde{\F}_n - \overline{Y} \| = \| \Tilde{H}_n \| = \| (\overline{X^{n+1}} - \overline{X^n})\Tilde{J}_n  \| \leq \| \Tilde{J}_n \| \| \overline{X^{n+1}} - \overline{X^n}\|  \leq (\mu + \zeta)\| \overline{X^{n+1}} - \overline{X^n}\|
\end{equation}
Inequality $(*)$ is due to the sub-multiplicative and sub-additive properties of the induced norm, $\| (I + \Delta_n) J_n^{-1}  \| \leq \|J_n^{-1}\| (\|I\| + \|\Delta_n\|)$ with $\|I\|=1$. Inequality $(**)$ is developed in \citet[p.~381]{matrixanalysisbook}, in the context of bounding the error in the inverse of an error-perturbed matrix. Also note that the rest of the inequalities in \eqref{eq:j-diff-bound} are well defined since $\delta<1/\zeta\beta$.

The proof now continues similarly to the proof of \citet[12.3.3]{ortega2000iterative}. We set $G_n=\overline{X^n}-\Tilde{H}_n \Tilde{J}^{-1}_n =\overline{X^{n+1}}$, and show that $G_n$ is an Iterated Contraction:

\begin{align*}
    \| \overline{X^{n+2}} - \overline{X^{n+1}} \| &= \| \overline{X^{n+1}} - \Tilde{H}_{n+1} \Tilde{J}_{n+1}^{-1} - \overline{X^{n+1}}\| = \| \Tilde{H}_{n+1} \Tilde{J}^{-1}_{n+1} \| \overset{(1)}{\leq}  \beta(1+\delta) \| \Tilde{H}_{n+1} \| \\
    &\leq \beta(1+\delta) \| \Tilde{H}_{n+1} - \Tilde{H}_n - (\overline{X^{n+1}} - \overline{X^n})\Tilde{J}_n  \| \\
    &\overset{(2)}{\leq} \beta(1+\delta) \| \Tilde{H}_{n+1} - \Tilde{H}_n - (\overline{X^{n+1}} - \overline{X^n})J_n  \| + \beta(1+\delta) \| \Tilde{J}_n - J_n \| \| \overline{X^{n+1}} - \overline{X^n} \|\\
    &\overset{(3)}{\leq} \beta(1+\delta) \left(2\lambda + \| H_{n+1} - H_n - (\overline{X^{n+1}} - \overline{X^n})J_n  \|\right) + \beta(1+\delta) \mu \| \overline{X^{n+1}} - \overline{X^n} \|\\
    &= \beta(1+\delta) \left(2\lambda + \| \F(\overline{X^{n+1}}) - \F(\overline{X^n}) - (\overline{X^{n+1}} - \overline{X^n})J(\overline{X^n})  \| + \mu \| \overline{X^{n+1}} - \overline{X^n} \|\right)\\
    &\overset{(4)}{\leq} \beta(1+\delta) \left(2\lambda + \gamma \| \overline{X^{n+1}} - \overline{X^n} \| + \mu \| \overline{X^{n+1}} - \overline{X^n} \|\right)\\
    &\leq \beta(1+\delta) \left(\frac{2\lambda}{\| \overline{X^{n+1}} - \overline{X^n} \|} + \gamma + \mu \right)\| \overline{X^{n+1}} - \overline{X^n} \|\\
    &\overset{(5)}{\leq} \beta(1+\delta) \left(\frac{2\lambda(\mu + \zeta)}{\| \Tilde{\F}_n - \overline{Y} \|} + \gamma + \mu \right)\| \overline{X^{n+1}} - \overline{X^n} \|
\end{align*}
where inequality $(1)$ holds because of Bound \ref{eq:j-1-approx-bound}, $(2)$ is the triangle inequality, $(3)$ is due to the Bounds \ref{eq:h-bound} and \ref{eq:j-diff-bound} and the triangle inequality. Inequality $(4)$ is proven in \citet[3.2.12]{ortega2000iterative}, {using the assumption $\forall x_1,x_2: \|J(x_1)-J(x_2)\|\leq \gamma$}, and inequality $(5)$ is from Bound \ref{eq:x-diff-bound}.  

Denote the function  $g:\mathbb{R}\to\mathbb{R}$, $g(x)\equiv \beta(1+\delta) \left(2\lambda(\mu + \zeta)/x + \gamma + \mu \right)$. Then
\begin{align*}
    \| \overline{X^{n+2}} - \overline{X^{n+1}} \| \leq g(\| \Tilde{\F}_n - \overline{Y} \|) \| \overline{X^{n+1}} - \overline{X^n} \|
\end{align*}
Assuming $\beta(1+\delta)(\gamma+\mu)<1$:
\begin{align*}
    g(\| \Tilde{\F}_n - \overline{Y} \|) =1 \iff \| \Tilde{\F}_n - \overline{Y} \|=\frac{2\lambda\beta(1+\delta)(\mu + \zeta)}{1-\beta(1+\delta)(\mu+\gamma)}\equiv\rho
\end{align*}
$g$ is strictly-decreasing function, thus if $\| \Tilde{\F}_n - \overline{Y} \| \geq \rho +\epsilon$ for some $\epsilon>0$ then $g(\| \Tilde{\F}_n - \overline{Y} \|)\leq\alpha<1$, where $\alpha$ is independent of $\| \Tilde{\F}_n - \overline{Y} \|$. Then, as long as $\| \Tilde{\F}_{n-1} - \overline{Y} \| \geq \rho +\epsilon$:
\begin{align*}
    \| \Tilde{\F}_n - \overline{Y} \| \leq (\mu + \zeta)\| \overline{X^{n+1}} - \overline{X^n}\| \leq \dots \leq (\mu + \zeta)\alpha^n \| \overline{X^1}-\overline{X^0} \|
\end{align*}
where the first inequality holds due to \ref{eq:x-diff-bound}.
Then, for every $\epsilon>0$ there exists $k<\infty$ such that one of the following holds:
\begin{enumerate}
    \item There exists $n< k$ where $\| \Tilde{\F}_n - \overline{Y} \| < \rho +\epsilon$ and the proof is done.
    \item For all $n<k$: $\| \Tilde{\F}_n - \overline{Y} \| \geq \rho +\epsilon$ and $\alpha^k \leq \frac{\rho+\epsilon}{(\mu + \zeta)\| \overline{X^1}-\overline{X^0} \|}$. Then $\| \Tilde{\F}_k - \overline{Y} \| \leq (\mu + \zeta)\alpha^k \|\overline{X^1}-\overline{X^0} \| \leq \rho + \epsilon$ and the proof is done.
\end{enumerate}

\subsection{Convergence Results for 1-Dimensional $\F$}\label{app:1d-results}

We restrict ourselves to the 1-dimensional case, where $\mathcal{X}=\mathcal{Y}=\mathbb{R}$, and assume the function $\F$ is strictly monotone and its maximum and minimum slopes are not too different, thus the function is "close to" linear. We then show convergence at a linear rate.

Let $\mathcal{S}^\F(x_1,x_2)\equiv(\F(x_1)-\F(x_2))/(x_1-x_2)$ denote the slope of $\F$ between $x_1$ and $x_2$, and $\max |\mathcal{S}^\F|\equiv \max_{x_1,x_2\in\mathcal{X}} |\mathcal{S}^\F(x_1,x_2)|$ denote the maximum absolute slope of $\F$ and similarly $\min |\mathcal{S}^\F|\equiv \min_{x_1,x_2\in\mathcal{X}} |\mathcal{S}^\F(x_1,x_2)|$ the minimum absolute slope.
\begin{assumption}\label{ass:F}
 $\F$ is continuous and strictly monotone, and $\frac{\max |\mathcal{S}^\F|}{\min |\mathcal{S}^\F|}\leq 2-\epsilon$ for some $0<\epsilon\leq 1$.
\end{assumption}
\begin{theorem}\label{the:2p}
    Assume $\mathcal{X}=\mathcal{Y}=\mathbb{R}$, that Assumption \ref{ass:F} holds, and that there are only two desired outputs $M=2$. Then for any $i\in \{1,2\}$ and any iteration $n$: $|\F(x_i^{n+1})-y_i|\leq(1-\epsilon)|\F(x_i^{n})-y_i|$.
\end{theorem}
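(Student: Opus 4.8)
The plan is to exploit the fact that with only $M=2$ points, the least-squares regression in line 4 of Algorithm \ref{alg:II} is exact interpolation: a line is determined by two points, so $\mathcal{G}_{\theta_{n+1}}$ is precisely the affine map passing through $(\F(x_1^n), x_1^n)$ and $(\F(x_2^n), x_2^n)$ in the (output, input) plane. Writing $s_n \equiv \mathcal{S}^\F(x_1^n, x_2^n)$ for the slope of $\F$ between the two current inputs (nonzero, since $\F$ is strictly monotone and the distinct desired outputs $y_1 \ne y_2$ keep the inputs distinct at every step, the update map having nonzero slope $1/s_n$), this line has slope $1/s_n$ and, evaluated at the desired output $y_i$, yields the clean secant-type update $x_i^{n+1} = x_i^n + (y_i - \F(x_i^n))/s_n$. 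This lets me avoid the pseudoinverse machinery of Equation \eqref{eq:update_rule} entirely and work with an explicit one-dimensional recursion based at $x_i^n$.

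Having established this update formula, I would convert it into a statement about the output error $\F(x_i^{n+1}) - y_i$. Introducing the secant slope $\sigma_n^i \equiv \mathcal{S}^\F(x_i^{n+1}, x_i^n)$, I write $\F(x_i^{n+1}) - \F(x_i^n) = \sigma_n^i (x_i^{n+1} - x_i^n) = (\sigma_n^i/s_n)(y_i - \F(x_i^n))$, and then add and subtract $\F(x_i^n)$ to obtain
\[
\F(x_i^{n+1}) - y_i = \left(\frac{\sigma_n^i}{s_n} - 1\right)\left(y_i - \F(x_i^n)\right).
\]
Taking absolute values, the one-step contraction factor is exactly $\left|1 - \sigma_n^i/s_n\right|$, and it only remains to bound this quantity by $1-\epsilon$.

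The crux — and the only place Assumption \ref{ass:F} enters — is bounding the ratio $r \equiv \sigma_n^i/s_n$. Since both $\sigma_n^i$ and $s_n$ are genuine slopes of $\F$ and $\F$ is strictly monotone, they share a sign, so $r > 0$ and $r = |\sigma_n^i|/|s_n|$. Bounding numerator and denominator by $\max|\mathcal{S}^\F|$ and $\min|\mathcal{S}^\F|$ in both directions gives $\frac{1}{2-\epsilon} \le r \le 2-\epsilon$. The target inequality $|1-r| \le 1-\epsilon$ is equivalent to $\epsilon \le r \le 2-\epsilon$; the upper bound is immediate, and the lower bound follows because $\frac{1}{2-\epsilon} \ge \epsilon \iff (1-\epsilon)^2 \ge 0$.

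I expect the main obstacle to be conceptual rather than computational: recognizing that $M=2$ collapses the regression to exact interpolation, and choosing to base the recursion at $x_i^n$ (rather than at $x_1^n$) so that the two error terms telescope into a single contraction factor. Once the recursion is written as $\F(x_i^{n+1}) - y_i = (\sigma_n^i/s_n - 1)(\F(x_i^n) - y_i)$, the remainder is the routine slope-ratio bound, whose tightness via $(1-\epsilon)^2 \ge 0$ explains precisely why the threshold $2-\epsilon$, and not a larger constant, must appear in Assumption \ref{ass:F}.
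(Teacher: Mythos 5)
Your proof is correct, and it is essentially the paper's own argument: your error recursion $\F(x_i^{n+1})-y_i=\left(\sigma_n^i/s_n-1\right)\left(y_i-\F(x_i^n)\right)$ is exactly the paper's Equation~\eqref{eq:y_diff_1d} with $\theta_{n+1}=1/s_n$ substituted, and your concluding bound $|1-r|\leq 1-\epsilon$ for $r\in\left[\frac{1}{2-\epsilon},2-\epsilon\right]$ is the paper's Inequality~\eqref{eq:gamma_bound}. The one point of divergence is how the regression slope is controlled: you exploit $M=2$ to compute $\theta_{n+1}=1/s_n$ exactly, so the contraction factor becomes a ratio of two genuine secant slopes of $\F$ and Assumption~\ref{ass:F} applies directly, whereas the paper only uses the two-point interpolation property to pin down the intercept $b_{n+1}$, keeps $\theta_{n+1}$ abstract, and bounds it by $1/S_{max}^{\F}\leq\theta_{n+1}\leq 1/S_{min}^{\F}$ via Lemma~\ref{lm:theta_bounds_1d}, a pairwise-decomposition bound on the least-squares slope valid for arbitrary $M$. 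Your shortcut makes the $M=2$ proof self-contained and slightly more elementary; the paper's detour through Lemma~\ref{lm:theta_bounds_1d} exists because that lemma is reused in Theorem~\ref{the:1d-multipoints}, where $M>2$ and the regression slope is no longer an exact reciprocal secant. Both proofs share the same implicit non-degeneracy requirement (distinct inputs at every iteration, so the regression is well posed), which you at least flag explicitly.
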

When the number of desired outputs is greater than 2, then convergence for each output is generally not guaranteed.
\begin{theorem}\label{the:1d-multipoints}
        Assume $\mathcal{X}=\mathcal{Y}=\mathbb{R}$, that Assumption \ref{ass:F} holds and that at iteration $n$, $\forall i$ $x_i^n<\F^{-1}(\overline{Y})$ or $\forall i$ $x_i^n>\F^{-1}(\overline{Y})$ . Then $\left|\overline{X^{n+1}}-\F^{-1}(\overline{Y})\right|\leq(1-\epsilon)\left|\overline{X^n}-\F^{-1}(\overline{Y})\right|$.
\end{theorem}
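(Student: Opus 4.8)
The plan is to make the scalar instance of the update rule \eqref{eq:update_rule} fully explicit and bound the error $\overline{X^{n+1}}-\F^{-1}(\overline{Y})$ directly through secant slopes of $\F$, which Assumption \ref{ass:F} controls. Write $x^\star\equiv\F^{-1}(\overline{Y})$ and $d_i\equiv x^\star-x_i^n$, so that $\overline{X^n}-x^\star=-\frac1M\sum_i d_i\equiv-\bar d$ and, by the common-sign hypothesis, $|\overline{X^n}-x^\star|=|\bar d|$. First I would write the scalar regression Jacobian as $\Tilde J_n^{-1}=\bigl[\sum_i(\F(x_i^n)-\overline{\F(X^n)})(x_i^n-\overline{X^n})\bigr]\big/\bigl[\sum_i(\F(x_i^n)-\overline{\F(X^n)})^2\bigr]$, and, using the pairwise identity $\sum_i(a_i-\bar a)(b_i-\bar b)=\frac{1}{2M}\sum_{i,j}(a_i-a_j)(b_i-b_j)$, rewrite it as a convex combination $\Tilde J_n^{-1}=\sum_{i,j}w_{ij}/\mathcal{S}^\F(x_i^n,x_j^n)$ with nonnegative weights $w_{ij}\propto(\F(x_i^n)-\F(x_j^n))^2$. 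Because $\F$ is strictly monotone, each reciprocal secant slope shares the sign of $\mathcal{S}^\F$ and has magnitude in $[1/\max|\mathcal{S}^\F|,\,1/\min|\mathcal{S}^\F|]$, so $\Tilde J_n^{-1}$ inherits that sign and magnitude range.

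Next I would decompose the residual as $\overline{Y}-\overline{\F(X^n)}=\frac1M\sum_i(\F(x^\star)-\F(x_i^n))=\frac1M\sum_i s_i d_i$ with $s_i\equiv\mathcal{S}^\F(x^\star,x_i^n)$, where $\mathrm{sign}(s_i)=\mathrm{sign}(\mathcal{S}^\F)$ and $|s_i|\in[\min|\mathcal{S}^\F|,\,\max|\mathcal{S}^\F|]$. Substituting into \eqref{eq:update_rule} gives $\overline{X^{n+1}}-x^\star=\frac1M\sum_i(\Tilde J_n^{-1}s_i-1)d_i$. The central observation is that $\Tilde J_n^{-1}s_i$ is a product of two quantities of equal sign, hence positive, and $\Tilde J_n^{-1}s_i=|\Tilde J_n^{-1}|\,|s_i|\in[\min|\mathcal{S}^\F|/\max|\mathcal{S}^\F|,\,\max|\mathcal{S}^\F|/\min|\mathcal{S}^\F|]\subseteq[\tfrac{1}{2-\epsilon},\,2-\epsilon]$ by Assumption \ref{ass:F}.

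Finally, the same-side hypothesis is exactly what closes the argument: since all $d_i$ carry one sign, $\frac1M\sum_i(\Tilde J_n^{-1}s_i)d_i$ is a positively-weighted combination lying between $\tfrac{1}{2-\epsilon}\bar d$ and $(2-\epsilon)\bar d$, so $\overline{X^{n+1}}-x^\star=\frac1M\sum_i(\Tilde J_n^{-1}s_i-1)d_i$ lies in $[-\tfrac{1-\epsilon}{2-\epsilon}\bar d,\,(1-\epsilon)\bar d]$ (up to the sign of $\bar d$). Since $\max\{(2-\epsilon)-1,\,1-\tfrac{1}{2-\epsilon}\}=\max\{1-\epsilon,\,\tfrac{1-\epsilon}{2-\epsilon}\}=1-\epsilon$ for $0<\epsilon\le1$, this yields $|\overline{X^{n+1}}-x^\star|\le(1-\epsilon)|\bar d|=(1-\epsilon)|\overline{X^n}-x^\star|$, as claimed.

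I expect the main obstacle to be the combination step, and specifically recognizing the indispensable role of the common-sign hypothesis: without it, the terms $(\Tilde J_n^{-1}s_i-1)d_i$ of mixed sign could reinforce instead of staying confined to a contractive interval, and no contraction need follow — consistent with the remark preceding the statement that for more than two points convergence generally fails. A secondary technical point to verify is that the convex-combination representation of $\Tilde J_n^{-1}$ is legitimate and that the denominator $\sum_i(\F(x_i^n)-\overline{\F(X^n)})^2$ is nonzero, which holds whenever the $x_i^n$ are not all equal (guaranteed by strict monotonicity together with a nondegenerate spread of inputs).
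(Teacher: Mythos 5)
Your proof is correct and is essentially the paper's own argument: your convex-combination representation of $\Tilde J_n^{-1}$ via the pairwise identity is exactly the paper's Lemma~\ref{lm:theta_bounds_1d} (proved there by the same pairwise-sum rewriting), and your per-point decomposition $\overline{X^{n+1}}-\F^{-1}(\overline{Y})=\frac1M\sum_i(\Tilde J_n^{-1}s_i-1)d_i$ with common-sign $d_i$ is algebraically the paper's weighted-mean slope $L_n=\sum_i w_{n,i}\,\mathcal{S}^\F(\F^{-1}(\overline{Y}),x_i^n)$ argument unfolded, with the common-sign hypothesis playing the identical role (positivity of the weights) and the same final bound $|1-\theta_{n+1}L_n|\leq 1-\epsilon$.
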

Theorem \ref{the:1d-multipoints} guarantees that after a finite number of iterations, the output segment intersects with the desired output segment. Note that Theorems \ref{the:2p} and \ref{the:1d-multipoints} do not require any kind of approximations as in Assumption \ref{ass:approx}, nor for $\F$ to be differentiable. 

\subsubsection{Proof of Theorem \ref{the:2p}}\label{app:2d}
Denote $S_{max}^\F \equiv \max_{x_1,x_2} \mathcal{S}^\F (x_1,x_2)$ and similarly $S_{min}^\F \equiv \min_{x_1,x_2} \mathcal{S}^\F (x_1,x_2)$.\\

Assuming $\mathcal{X}=\mathcal{Y}=\mathbb{R}$. Then the approximated linear function is $\mathcal{G}_{\theta,b}(y)=y\theta + b$ where $\theta,b\in\mathbb{R}$ are scalars.\\
At iteration $n+1$ and for $i\in[1,M]$:
\begin{equation}\label{eq:x_i-1d}
    x_i^{n+1} = \mathcal{G}_{\theta_{n+1},b_{n+1}}(y_i)=y_i\theta_{n+1}+b_{n+1},
\end{equation}
\begin{align*}
    \theta_{n+1}, b_{n+1} = \argmin_{\theta, b} \sum_{i=1}^M \left(\theta \F(x_i^n)+b - x_i^n\right)^2.
\end{align*}

\begin{lemma}\label{lm:theta_bounds_1d}
if $\mathcal{X}=\mathcal{Y}=\mathbb{R}$ then $\forall n$: $\frac{1}{S_{max}^\F}\leq \theta_{n+1}  \leq \frac{1}{S_{min}^\F}$ if $\F$ is strictly increasing and $\frac{1}{\mathcal{S}_{min}^\F}\leq \theta_{n+1}  \leq \frac{1}{\mathcal{S}_{max}^\F}$ if $\F$ is strictly decreasing. 
\end{lemma}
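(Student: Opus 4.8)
The plan is to write the least-squares slope $\theta_{n+1}$ in closed form and recognize it as a convex combination of the reciprocals of the pairwise secant slopes of $\F$. Setting $u_i \equiv \F(x_i^n)$ and $v_i \equiv x_i^n$, the solution of the one-dimensional regression from outputs to inputs is the familiar ratio
\begin{equation*}
\theta_{n+1} = \frac{\sum_{i=1}^{M}(u_i - \bar u)(v_i - \bar v)}{\sum_{i=1}^{M}(u_i - \bar u)^2},
\end{equation*}
which is well defined precisely when the $u_i$ are not all equal; by strict monotonicity this holds as soon as at least two of the $x_i^n$ are distinct, a nondegeneracy condition that I would either assume or flag explicitly.

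First I would rewrite both numerator and denominator as sums over ordered pairs, using the standard identities $\sum_i(u_i-\bar u)(v_i-\bar v) = \frac{1}{2M}\sum_{i,j}(u_i-u_j)(v_i-v_j)$ and $\sum_i(u_i-\bar u)^2 = \frac{1}{2M}\sum_{i,j}(u_i-u_j)^2$, each verified by expanding the double sum. The factors $\tfrac{1}{2M}$ cancel, giving
\begin{equation*}
\theta_{n+1} = \frac{\sum_{i<j}(u_i-u_j)(v_i-v_j)}{\sum_{i<j}(u_i-u_j)^2}.
\end{equation*}

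The key algebraic step comes next: for each pair $i<j$ the secant slope is $\mathcal{S}^\F(x_i^n,x_j^n) = (u_i-u_j)/(v_i-v_j)$, so $(u_i-u_j)(v_i-v_j) = (u_i-u_j)^2 / \mathcal{S}^\F(x_i^n,x_j^n)$. Substituting, I obtain
\begin{equation*}
\theta_{n+1} = \sum_{i<j} w_{ij}\,\frac{1}{\mathcal{S}^\F(x_i^n,x_j^n)}, \qquad w_{ij} \equiv \frac{(u_i-u_j)^2}{\sum_{k<l}(u_k-u_l)^2},
\end{equation*}
where the weights satisfy $w_{ij}\ge 0$ and $\sum_{i<j}w_{ij}=1$, so that $\theta_{n+1}$ is a genuine convex combination of the reciprocal secant slopes. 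Since every secant slope lies between $S_{min}^\F$ and $S_{max}^\F$, and (by strict monotonicity) all of these share a common sign, each reciprocal $1/\mathcal{S}^\F(x_i^n,x_j^n)$ lies in the closed interval spanned by $1/S_{max}^\F$ and $1/S_{min}^\F$; a convex combination remains in that interval, which gives the claimed two-sided bound, with the orientation of the inequalities determined by whether $\F$ is increasing (positive slopes) or decreasing (negative slopes).

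The computations are routine, so the main difficulty is not depth but bookkeeping: the pair-sum identities must be expanded without error, and the ordering of the endpoints $1/S_{max}^\F$ versus $1/S_{min}^\F$ must be tracked separately in the increasing and decreasing cases, since passing to reciprocals reverses the order only within a fixed sign. I would also make sure to record the nondegeneracy of the denominator (distinct inputs) as the one hypothesis that must be implicitly present for $\theta_{n+1}$ to exist at all.
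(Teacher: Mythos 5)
Your proposal is correct and follows essentially the same route as the paper's proof: both write $\theta_{n+1}$ as the covariance-over-variance ratio, convert it to pairwise (secant) form via the same double-sum identity, and then bound it between the reciprocals of the extreme slopes -- your convex-combination packaging is just a cleaner phrasing of the paper's term-by-term sandwich of each numerator summand. A minor point in your favor: you explicitly record the nondegeneracy condition (at least two distinct inputs) needed for the denominator to be nonzero, and you state the decreasing case as ``the interval spanned by $1/S_{max}^\F$ and $1/S_{min}^\F$,'' which sidesteps the ordering ambiguity in the signed-slope convention that the lemma statement itself glosses over.
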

\begin{proof}
We will prove for strictly increasing $\F$. The proof for strictly decreasing $\F$ is symmetrical.\\ 
Without loss of generality, we assume that $X^n$ is sorted: $\forall i$: $x_i^n\leq x_{i+1}^n$. Let $k>i$ then:
\begin{align*}
    \F(x_i^n) + \mathcal{S}_{min}^\F (x_k^n-x_i^n) \leq \F(x_k^n)\leq   \F(x_i^n) + \mathcal{S}_{max}^\F (x_k^n-x_i^n),
\end{align*}
\begin{align*}
    \frac{1}{\mathcal{S}_{max}^\F}(\F(x_k^n)-\F(x_i^n))\leq x_k^n - x_i^n\leq \frac{1}{\mathcal{S}_{min}^\F}(\F(x_k^n)-\F(x_i^n)),
\end{align*}
\begin{align*}
    \theta_{n+1} &=\frac{\frac{1}{M}\sum_{i=1}^M (x_i^n-\overline{X^n})\left(\F(x_i^n)-\overline{\F(X^n)}\right)}{\frac{1}{M}\sum_{i=1}^M \left(\F(x_i^n)-\overline{\F(X^n)}\right)^2}=\\
    &=\frac{\frac{1}{M^2}\sum_{i=1}^{M-1}\sum_{k=i+1}^M (x_k^n-x_i^n)\left(\F(x_k^n)-\F(x_i^n)\right)}{\frac{1}{M^2}\sum_{i=1}^{M-1}\sum_{k=i+1}^M \left(\F(x_k^n)-\F(x_i^n)\right)^2}\\
    &\leq \frac{\frac{1}{M^2}\sum_{i=1}^{M-1}\sum_{k=i+1}^M \frac{1}{\mathcal{S}_{min}^\F} \left(\F(x_k^n)-\F(x_i^n)\right)^2}{\frac{1}{M^2}\sum_{i=1}^{M-1}\sum_{k=i+1}^M \left(\F(x_k^n)-\F(x_i^n)\right)^2}=\frac{1}{S_{min}^\F},
\end{align*}
and
\begin{align*}
    \theta_{n+1} \geq \frac{\frac{1}{M^2}\sum_{i=1}^{M-1}\sum_{k=i+1}^M \frac{1}{\mathcal{S}_{max}^\F} \left(\F(x_k^n)-\F(x_i^n)\right)^2}{\frac{1}{M^2}\sum_{i=1}^{M-1}\sum_{k=i+1}^M \left(\F(x_k^n)-\F(x_i^n)\right)^2}=\frac{1}{\mathcal{S}_{max}^\F}.
\end{align*}
\end{proof}

When $M=2$, the regression line passes exactly at the points $(\mathcal{F}(x^n_1), x^n_1)$ and $(\mathcal{F}(x^n_2), x^n_2)$, and $b_{n+1}$ also takes the following forms:
\begin{align*}
    b_{n+1}=x^n_1-\theta_{n+1}\F(x^n_1)=x^n_2-\theta_{n+1}\F(x^n_2).
\end{align*}
Then, plugging $b_{n+1}$ in Equation \ref{eq:x_i-1d} we get for every $i\in[1,2]$:
\begin{align*}
    x_i^{n+1}=x_i^n+\theta_{n+1}(y_i-\F(x_i^n)).
\end{align*}
Denote the slope of $\F$ between $x_i^{n+1}$ and $x_i^n$: $\mathcal{S}^\F(x^{n+1}_i,x_i^n)\equiv\frac{\F(x_i^{n+1})-\F(x_i^n)}{x_i^{n+1}-x_i^n}=\frac{\F(x_i^{n+1})-\F(x_i^n)}{\theta_{n+1}(y_i-\F(x_i^n))}.
$
Then the following equations hold:
\begin{align*}
    \F(x_i^{n+1})=\F(x_i^n)+\theta_{n+1}\mathcal{S}^\F(x_i^{n+1},x_i^n) (y_i-f(x_i^n)),
\end{align*}
\begin{equation}\label{eq:y_diff_1d}
    y_i - \F(x_i^{n+1})=\left(1-\theta_{n+1}\mathcal{S}^\F(x_i^{n+1},x_i^n) \right)(y_i-\F(x_i^n)).
\end{equation}
Using Lemma \ref{lm:theta_bounds_1d}, and since $\F$ is always increasing or always decreasing, then $\theta_{n+1} \mathcal{S}^\F(x_i^{n+1},x_i^n))>0$ and
\begin{align*}
   \frac{1}{2-\epsilon} \leq \frac{\min |\mathcal{S}^\F|}{\max |\mathcal{S}^\F|}\leq \theta_{n+1} \mathcal{S}^\F(x_i^{n+1},x_i^n)) \leq \frac{\max |\mathcal{S}^\F|}{\min |\mathcal{S}^\F|}\leq 2-\epsilon ,
\end{align*}
\begin{equation}\label{eq:gamma_bound}
    \left| 1- \theta_{n+1} \mathcal{S}^\F(x_i^{n+1},x_i^n)) \right| \leq \max \left\{ |1-\frac{1}{2-\epsilon}|, |1-\epsilon |\right\}=1-\epsilon.
\end{equation}
Then, plugging into Equation \ref{eq:y_diff_1d},
\begin{align*}
        \left| y_i - \F(x_i^{n+1}) \right| =  \left|1-\theta_{n+1}\mathcal{S}^\F(x_i^{n+1},x_i^n) \right| \left|y_i-\F(x_i^n)\right| \leq (1-\epsilon) \left| y_i - \F(x_i^n) \right|.
\end{align*}
Note the convergence in one iteration for the linear case when $\epsilon=1$.

\subsubsection{Proof of Theorem \ref{the:1d-multipoints}}\label{app:1d-multipoints}
Denote $L_n$:
\begin{align*}
    L_n&\equiv\frac{\overline{Y}-\overline{\F(X^n)}}{\F^{-1}(\overline{Y})-\overline{X^n}}=\frac{\frac{1}{M}\sum_{i=1}^N \overline{Y}-f(x_i^n)}{\frac{1}{M}\sum_{k=1}^N \F^{-1}(\overline{Y})-x_k^n}=\sum_{i=1}^M \left( \frac{\F^{-1}(\overline{Y})-x_i^n}{\sum_{k=1}^M \F^{-1}(\overline{Y})-x_k^n}\right) \frac{\overline{Y}-f(x_i^n)}{\F^{-1}(\overline{Y})-x_i^n}\\
    &= \sum_{i=1}^M w_{n,i} \frac{\overline{Y}-f(x_i^n)}{\F^{-1}(\overline{Y})-x_i^n}=\sum_{i=1}^M w_{n,i}\ \mathcal{S}^\F(\F^{-1}(\overline{Y}),\ x_i^n).
\end{align*}
Where $w_{n,i}=\frac{\F^{-1}(\overline{Y})-x_i^n}{\sum_{k=1}^M \F^{-1}(\overline{Y})-x_k^n}$, $\sum_{i=1}^M w_{n,i}=1$ and, since we assumed $\forall i\ x_i^n<\F^{-1}(\overline{Y})$ or that $\forall i\ x_i^n>\F^{-1}(\overline{Y})$, then $\forall i$ $w_{n,i}>0$. Therefore $L_n$ is a weighted-mean of the slopes and  
$\mathcal{S}_{min}^\F \leq L_n \leq \mathcal{S}_{max}^\F$.\\
From Equation \ref{eq:update_rule} the following holds: 
\begin{align*}
    \overline{X^{n+1}}-\overline{X^n}=\theta_{n+1}(\overline{Y}-\overline{\F(X^n)})=\theta_{n+1}L_j(\F^{-1}(\overline{Y})-\overline{X^n}),
\end{align*}
\begin{equation}
    \F^{-1}(\overline{Y}) - \overline{X^{n+1}} = (1-\theta_{n+1}L_j)(\F^{-1}(\overline{Y})-\overline{X^n}).
\end{equation}
Using Lemma \ref{lm:theta_bounds_1d} and the inequalities $\mathcal{S}_{min}^\F \leq L_n \leq \mathcal{S}_{max}^\F$, Inequality \eqref{eq:gamma_bound} from Appendix \ref{app:2d} also applies for $L_n$, and we obtain:
\begin{align*}
        \left| 1- \theta_{n+1} L_n \right| \leq 1-\epsilon,
\end{align*}
\begin{align*}
    \left| \F^{-1}(\overline{Y}) - \overline{X^{n+1}} \right| \leq (1-\epsilon)\left| \F^{-1}(\overline{Y})- \overline{X^n} \right|.
\end{align*}
\subsection{Tightness of the Derivative Ratio Bound for 1-Dimensional $\F$}
We consider the case where $\F$ is a 1-dimensional function and provide a simple negative example to demonstrate the tightness of the derivative ratio bound, $\frac{\max |\mathcal{S}^\F|}{\min |\mathcal{S}^\F|}< 2$. As described in Example \ref{ex:1d}, in this case, the second approximation in Assumption \ref{ass:approx} is perfect, and using the bounds in Assumption \ref{ass:d-bounds}, the condition for convergence in Theorem \ref{the:multidim} is equivalent to $\frac{\max |\mathcal{S}^\F|}{\min |\mathcal{S}^\F|}< 2$. We assume this condition is not satisfied, i.e., that the maximum slope is more than twice the minimum slope, and show that convergence does not occur, and that the first initial input guess is closer to the desired inputs than all the following iterations. 
\begin{example}\label{ex:negative}
    Let $\epsilon,a,b,\delta,\Delta>0$ and define the continuous 1-dimensional, increasing and piece-wise linear function $\F$ with 5 linear segments:
    \begin{align*}
        \F(x)=\begin{cases}
            x & x\leq a\\
            a + (2+\epsilon)(x-a) & a< x \leq a+\Delta \\
            (1+\epsilon)\Delta + x & a + \Delta < x \leq a + \Delta + b\\
            a + b + (2+\epsilon)(x-a-b) & a + b + \Delta < x \leq a + b + 2\Delta\\
            2(1+\epsilon)\Delta + x  & a + b + 2\Delta < x 
        \end{cases}
    \end{align*}
    Notice that the minimum slope of $\F$ is $1$ and the maximum is $(2+\epsilon)$.
    We set $\Delta\equiv\frac{b/2+3\delta}{\epsilon}$. \\
    Let there be two desired inputs $x_1^*=a+\Delta+ b/2-\delta$, $x_2^*=a+\Delta+ b/2 + \delta$, and their outputs $y_1^*=\F(x_1^*)=a+b/2-\delta+(2+\epsilon)\Delta=a+b+2\Delta+2\delta$, $y_2^*=\F(x_2^*)=a+b+2\Delta+4\delta$. They are both placed in the 3rd linear segment of the function.\\
    We set the initial inputs $x_1^0=a-\delta$, $x_2^0=a$ (placed in the 1st linear segment), their corresponding outputs will be the same: $y_1^0=x_1^0, y_2^0=x_2^0$, and $\mathcal{G}_{\theta_1}(y)=y$.\\
    At iteration $n=1$, the inputs will be $x_1^1=\mathcal{G}_{\theta_1}(y_1^*)=y_1^*, x_2^1=\mathcal{G}_{\theta_1}(y_2^*)=y_2^*$ (in the 5th linear segment) and $\mathcal{G}_{\theta_2}(y)=y-2(1+\epsilon)\Delta$. \\
    At iteration $n=2$, the inputs will be $x_1^2=\mathcal{G}_{\theta_2}(y_1^*)=y_1^*-2(1+\epsilon)\Delta=a-4\delta, x_2^3=\mathcal{G}_{\theta_1}(y_2^*)=a-2\delta$ (in the 1st linear segment), and $\mathcal{G}_{\theta_3}(y)=y$ again. \\
    Similarly, at every even iteration, $x_1^{2n}=x_1^2, x_2^{2n}=x_2^2$ the current input-output pairs will be equal and in the 1st segment, and at every odd iteration $x_1^{2n+1}=x_1^1, x_2^{2n+1}=x_2^1$ the current input-output pairs will be equal and in the 5th segment. Also, starting from the 2nd iteration, the distance from the current input-outputs to the desired inputs-outputs will stay constant, which is larger than the distance from the initial input-output pairs: $\forall n>1$: $\| x_1^* - x_1^{n} | = \| x_2^* - x_2^{n} \| = \Delta + b/2 + 3\delta > \Delta + b/2 = \| x_1^* - x_1^0 \| = \| x_2^* - x_2^0 \|$.
\end{example}

\begin{figure}[]
    \centering
    \includegraphics[width=0.8\linewidth,keepaspectratio]{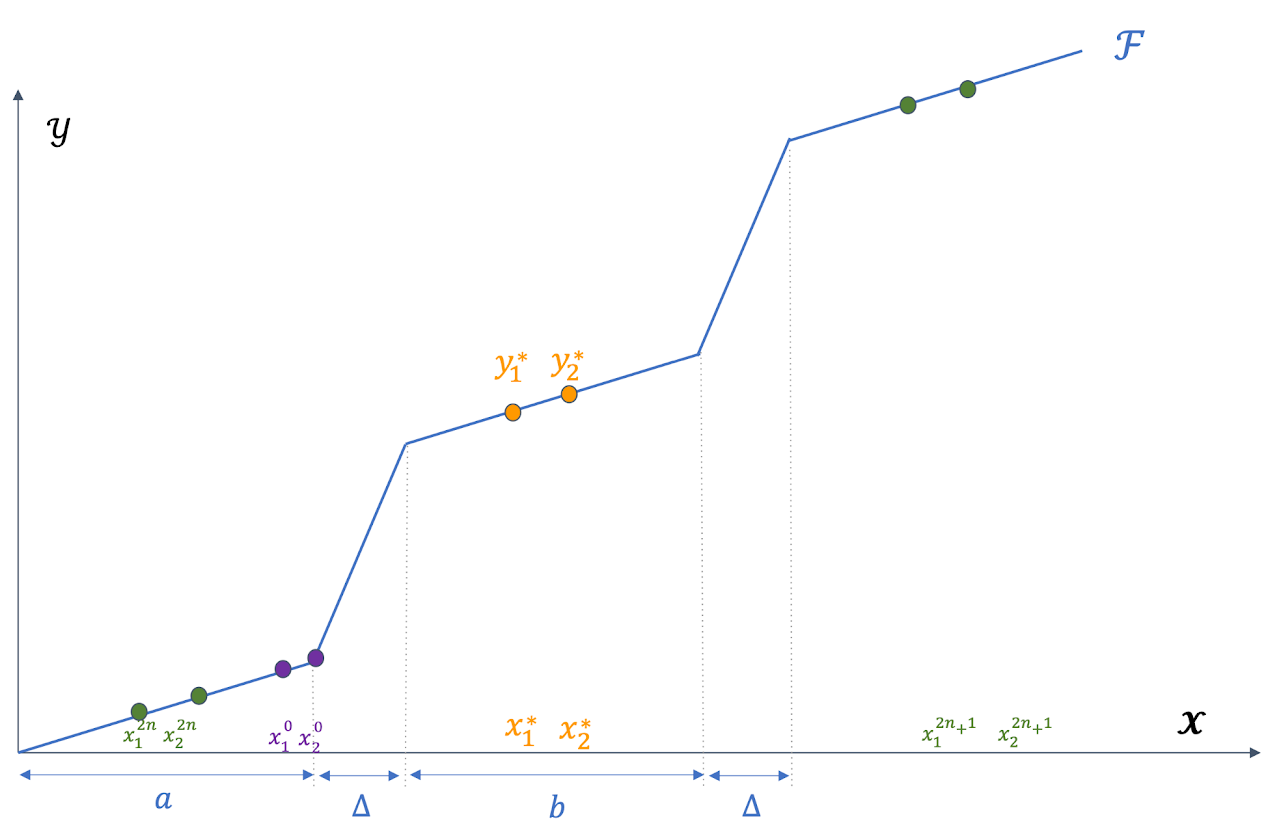}
    \vspace{-1em}
    \caption{Visual illustration of the 1-dimensional function $\F$, described in Example \ref{ex:negative}, for which convergence is not achieved. }
    \label{fig:negative_example}
    \vspace{-0.5em}
\end{figure}

\newpage

\section{Experimental Details}
Table~\ref{table:common_hps} contains a list of common hyperparameter values that we have used for all the experiments. Table~\ref{table:particle_reacher_hps} contains \texttt{Particle} and \texttt{Reacher-v2} specific hyperparameters, while Table~\ref{table:hopper_hps} is listing \texttt{Hopper-v2} specific hyperparameters. We note that the minor difference in hyperparameter values between the domains evaluated is purposed only at achieving slightly better MSE results per domain. We observed that the steering behavior was relatively robust to hyperparameter values.

\begin{table}[h]
    \centering
    \caption{Common hyperparameters for all experiments }
    \label{table:common_hps}
        \begin{tabular}{l|l}
        \hline
        \multicolumn{1}{c|}{Hyperparameter}                    & \multicolumn{1}{c}{Value} \\ \hline
        Learning rate                                          & 5e-4                      \\
        Sampled rollouts per epoch ($N$)                       & 200 x Minibatch size      \\
        Training iterations                                    & 2,000                     \\
        Training buffer size [rollouts] ($K$ x $N$)            & 40 x N                    \\
        Steering buffer $\steeringbuffer$ size [rollouts]      & 500                       \\
        Ratio of steering intents in minibatch ($\steerratio$) & 0.3                       \\
        Gradient norm clipping                                 & 0.5                       \\
        GPT: \# layers                                         & 8                         \\
        GPT: \# heads                                          & 4                         \\
        GPT: hidden layer size                                 & 64                        \\
        GPT: dropout                                           & 0.2                       \\
        GPT: attention dropout                                 & 0.3                       \\
        \end{tabular}
\end{table}

\begin{table}[h]
    \begin{minipage}[c]{0.5\textwidth}
    \centering
    \caption{Particle \& Reacher-v2 hyperparameters}
    \label{table:particle_reacher_hps}
        \begin{tabular}{l|l}
        \hline
        \multicolumn{1}{c|}{Hyperparameter} & \multicolumn{1}{c}{Value} \\ \hline
        Minibatch size (rollouts)           & 8                         \\
        Noise scale ($\noise$)              & 4.0                       \\
        Total epochs ($n$)                  & 160                      
        \end{tabular}
    \end{minipage}
    \begin{minipage}[c]{0.5\textwidth}
    \centering
    \caption{Mujoco Hopper-v2 hyperparameters}
    \label{table:hopper_hps}
        \begin{tabular}{l|l}
        \hline
        \multicolumn{1}{c|}{Hyperparameter} & \multicolumn{1}{c}{Value} \\ \hline
        Minibatch size (rollouts)           & 6                         \\
        Noise scale ($\noise$)              & 1.0                       \\
        Total epochs ($n$)                  & 130                      
        \end{tabular}
    \end{minipage}
\end{table}

\subsection{Particle Robot}
The 2D plane in which the robot is allowed to move is a finite square, with the maximum coordinates (denoted $C_{max}$) increasing for longer horizons. When rendering the videos we include the entire 2D plane, up to the maximum coordinates. When evaluating policies, a validation set of 2,000 trajectories was used, which were unseen during training of the policies.

\subsubsection{Datasets}
\label{particle:datasets}
\paragraph{\texttt{Splines}}
Trajectories follow the function of a B-spline curve\footnote{\url{https://en.wikipedia.org/wiki/B-spline}}. The curves are of degree 2 with 5 control points, which are uniformly sampled between $[0, C_{max}]$ in a 2-dimensional space.

\paragraph{\texttt{Deceleration}}
Random $F_x$ and $F_y$ forces for the first $t_{acc}$ trajectory steps, and then $T - t_{acc}$ steps of deceleration, where $T$ is the time horizon. Deceleration at step $j>t_{acc}$ is done by setting $F_x^{j} = -\frac{1}{2}\frac{V_x^{j-1}}{\Delta t}, F_y^{j} = -\frac{1}{2}\frac{V_y^{j-1}}{\Delta t}$ (assuming the mass of the particle is $1$). $V$ and $\Delta t$ are defined in Section~\ref{ssec:domains_particle}. We use $\Delta_t = 0.1$.

\subsection{Reacher-v2}
\subsubsection{Datasets}
\label{reacher:datasets}
\paragraph{\texttt{FixedJoint}}
Trajectories were collected to represent a scenario where one of the two robot arm joints is malfunctioning and is force fixed in place. The policy can only control the other robotic arm joint. When evaluating policies, a validation set of 2,000 trajectories was used, which were unseen during training of the policies.

\subsection{Hopper-v2}
\subsubsection{Datasets}
\label{hopper:datasets}
\paragraph{Hopping}
The datasets of size 2180 trajectories used for sequence-lengths 64 and 128 were extracted from D4RL's \texttt{hopper-medium-v2}, and consist of mostly forward hopping behaviors. When evaluating policies, a validation set of 436 trajectories was used, which were unseen during training of the policies. Unlike in the other evaluated domains, where trajectories sampled from a random policy were used to train the VQ-VAE, in \texttt{Hopper-v2} we have used input videos from D4RL's \texttt{Hopper-medium-v2} - the reason is that using the initial random policy, the trajectories terminated (hopper fell down) before reaching the desired $T$. For \methodname\ training, we have modified \texttt{Hopper-v2} slightly so that the episode will not terminate when the Hopper falls, thus allowing it to reach T steps.

\subsection{GPT-Based Architecture}
\label{app:gpt}
The model is conditioned on the intent via cross-attention. The actor network is comprised of 2  hidden Linear layers of size 64, with a tanh activation.

The GPT model size hyper-parameters had an effect on the results, but these did not strictly improve with a bigger model. We experimented with several settings for number of heads (1, 2, 4) and number of layers (2, 4, 8) in the GPT model, and with these parameters, indeed the highest value gave better results. However, for the hidden dimension of the model, we tried 2 values (64, 128) and found the lower one gave better results. As for the context size, we only tried setting it to the entire trajectory length.

\subsection{GRU-Based Architecture}
\label{app:gru}
The single-layer GRU's hidden state size is set to match the flattened intent size of 4096. As with the GPT-based architecture, the actor network is comprised of 2 hidden layers of size 64 with tanh activations. 

\begin{table}[b]
    \vspace{-1em}
    \begin{minipage}[c]{0.5\textwidth}
    \centering
    \caption{RL hyperparameters}
    \label{table:rl_hps}
        \begin{tabular}{l|l}
        \hline
        \multicolumn{1}{c|}{Hyperparameter}     & \multicolumn{1}{c}{Value} \\ \hline
        PPO Clip Ratio                          & 0.2                       \\
        GAE $\lambda$                           & 0.95                      \\
        Discount rate $\gamma$                  & 0.99                      \\
        Learning rate                           & 1e-4                      \\
        Value loss coefficient                  & 0.5                       \\
        \# epochs                               & 4                         \\
        \# rollouts sampled per policy update   & 128                       \\
        Total iterations                        & 5000                     
        \end{tabular}
    \end{minipage}
    \begin{minipage}[c]{0.5\textwidth}
    \centering
    \caption{GAIL hyperparameters}
    \label{table:gail_hps}
        \begin{tabular}{l|l}
        \hline
        \multicolumn{1}{c|}{Hyperparameter}     & \multicolumn{1}{c}{Value} \\ \hline
        Discriminator batch size                & 128                       \\
        Discriminator learning rate             & 1e-3                      \\
        Gradient penalty $\lambda$              & 10                        \\
        \end{tabular}
    \end{minipage}
\end{table}

\subsection{RL Baseline}
\label{app:rl}
For the RL baseline, both actor and critic networks are 2-layer MLPs (of size 64) with tanh activations. Table~\ref{table:rl_hps} summarizes the hyperparameters used for training RL policies with PPO~\cite{schulman2017proximal} and a GRU policy. 

\subsubsection{MLP-Based Architecture for RL Baseline}
\label{app:rl_mlp}
For the RL baseline only, we also experimented with an MLP-based architecture. In this case, both the the intent and observation go through a single Linear layer followed by Layer Normalization and a ReLU activation. The output sizes of the Linear layers we used were 256 and 64 for the intents and observations, respectively. All other hyperparameters are identical to those shown in Table~\ref{table:rl_hps}.

\subsection{GAIL Baseline}
\label{app:gail}
Our GAIL from observations~\citep{torabi2018generative} implementation is based on~\citet{pytorchrl}, and uses PPO as the RL algorithm. We note that~\citet{pytorchrl} implements "vanilla" GAIL~\citep{ho2016generative}, which uses state + action pairs as input to the discriminator. Therefore, to match the setup of~\citet{torabi2018generative}, we modified the implementation so that the discriminator is fed state + next-state pairs. To add the intent as context to GAIL, it is concatenated to the state transition pair. Due to the large discrepancy in size between the intent (size 4096) and the state transition pair (size 8 in the case of \texttt{Particle:Splines}), before concatenating the intent to the state we downscale it to size 256 using a Linear layer. In addition to the downscaled intent, we also concatenate the timestep of the transition (normalized between $[-1,1]$) to the discriminator input. We found this improved GAIL performance in our experiments on \texttt{Particle:Splines}. The discriminator itself is a 3-layer MLP with a hidden dimension of 100 and tanh activations. GAIL specific hyperparameters are provided in Table~\ref{table:gail_hps}. PPO hyperparameters used in GAIL experiments are the same as in Table~\ref{table:rl_hps}.

We experimented with two reward modes involving GAIL: (1) \texttt{GAIL}: The standard formulation of GAIL, where the reward to the RL algorithm is the log of the discriminator output, and (2) \texttt{GAIL+STATE-MSE}, where we combined (by simple addition) the standard GAIL reward and the \texttt{STATE-MSE} reward defined in Section~\ref{ssec:rl_baselines}, in an attempt to see if a combination of the 2 reward signals (from the environment and from the discriminator) will result in an improved overall signal. As can be seen in Figure~\ref{fig:rl_appendix}, while (2) indeed improved over (1), neither was able to outperform PPO with \texttt{STATE-MSE} nor \methodname.

\newpage

\section{Additional Experimental Results}

\subsection{RL and IRL Baselines}
\label{app:rl_irl_baselines}
In Figure~\ref{fig:rl_appendix} we present additional results comparing \methodname\ to RL and IRL baselines.

As described in Section~\ref{ssec:rl_baselines}, \texttt{INTENT-MSE} is a sparse RL reward, calculated as the MSE between intents of the desired trajectory and the executed trajectory, given at the end of the episode. In our experiments, \texttt{INTENT-MSE} doesn't come close to \methodname, which we attribute to the more difficult learning from sparse reward. \\ 
We found the MLP-based RL policy (see Appendix~\ref{app:rl_mlp}) underperformed the GRU-based RL policy (though not too significantly). This can be attributed to the task being that of tracking a trajectory (specified by the intent), which can be easier with access to the trajectory performed by the agent so far. \\
Finally, Figure~\ref{fig:rl_appendix} shows the results for \texttt{GAIL+STATE-MSE}, the additional reward mode involving GAIL, as discussed in Appendix~\ref{app:gail}. This reward mode improves upon the standard GAIL reward, but does not perform better than neither \texttt{STATE-MSE} or \methodname.

In Figure~\ref{fig:training_curves} we present example training curves comparing \methodname\ and PPO with \texttt{STATE-MSE} reward.

\begin{figure}[H]
    \centering
    \vspace{-1em}
    \begin{minipage}{1\textwidth}
        \centering
        \includegraphics[width=0.7\columnwidth,keepaspectratio]{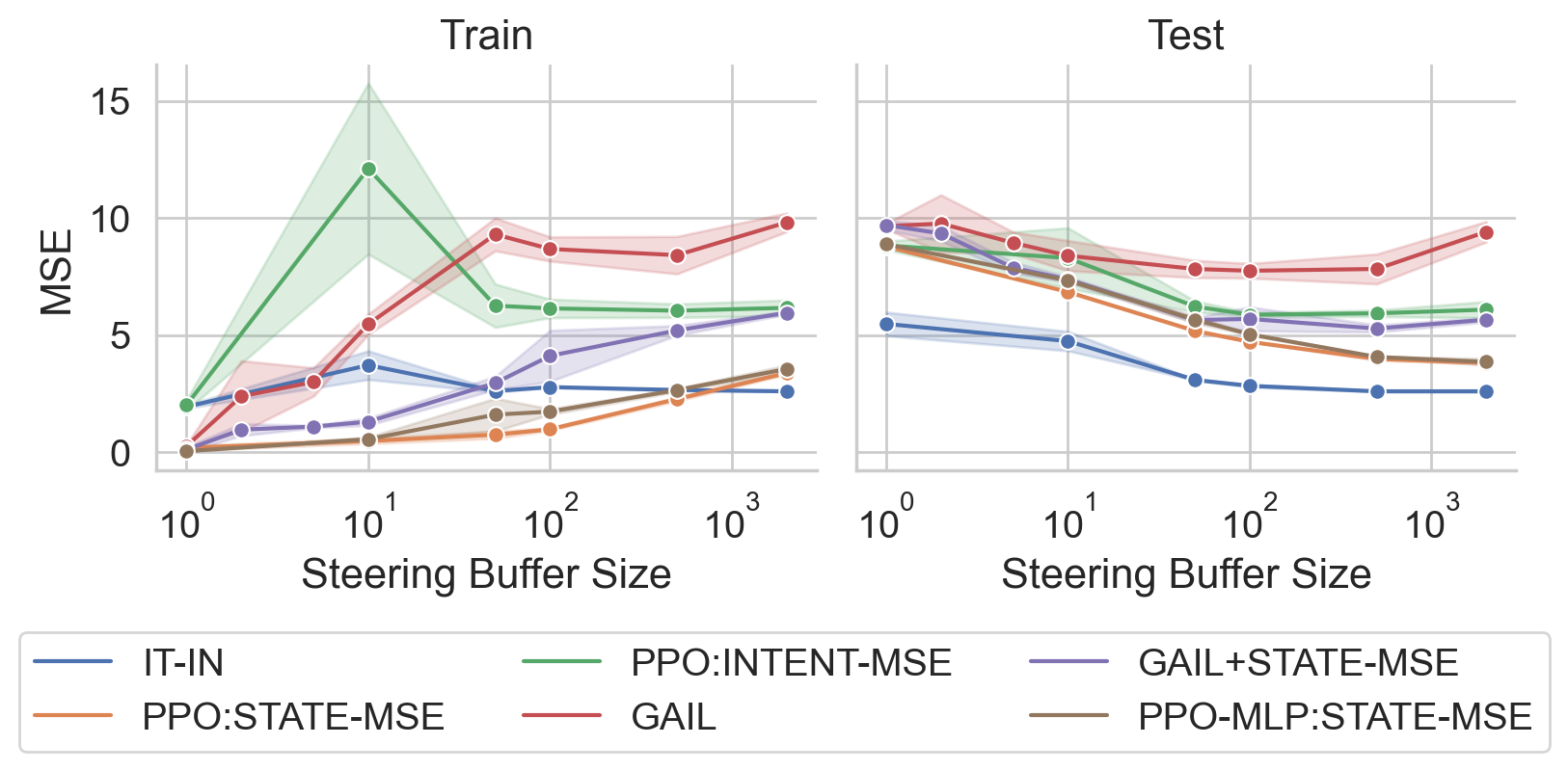}
        \caption{Comparison of \methodname\ with PPO and GAIL baselines. This figure includes the results shown in Figure~\ref{fig:rl} and additional results discussed in Appendix~\ref{app:rl_irl_baselines}. All experiments are on the \texttt{Particle:Splines} environment with $T=16$. All results are MSE (lower is better), each represented with a mean and standard deviation of 3 random seeds. Note that \methodname\ outperforms baselines on test trajectories (graph on the right) for all $\steeringbuffer$ sizes. For GAIL, $\steeringbuffer$ is also used as the expert data for the discriminator. }
        \label{fig:rl_appendix}
    \end{minipage}
    \begin{minipage}{1\textwidth}
        \centering
        \includegraphics[scale=0.5]{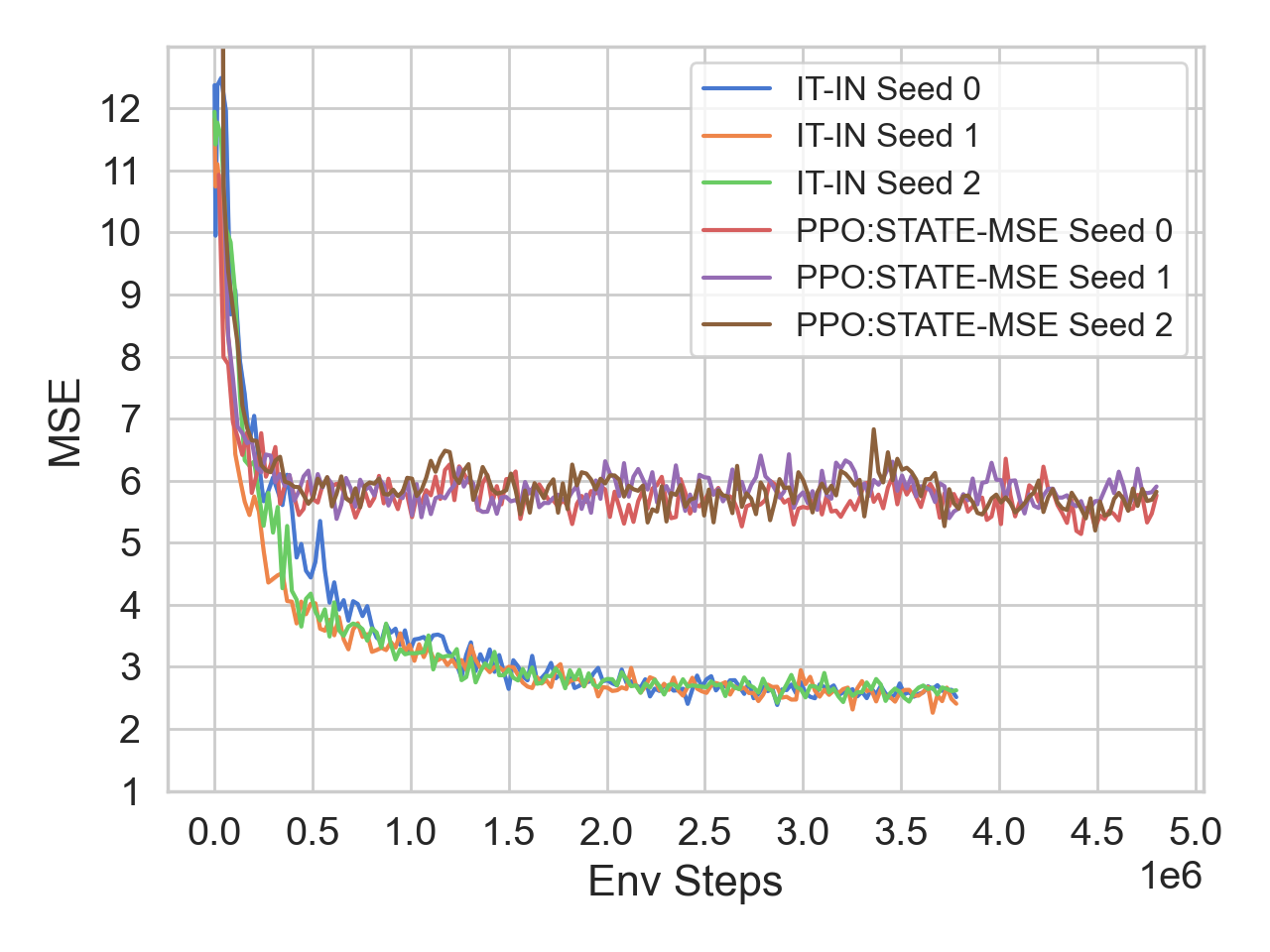}
        \caption{Comparison of test MSE convergence of \methodname\ vs. PPO with \texttt{STATE-MSE} reward. All runs are in the \texttt{Particle:Splines} environment, with $|\steeringbuffer|=500$ and horizon $T=16$. The MSE is calculated on a held out set of 500 trajectories. }
        \label{fig:training_curves}
    \end{minipage}
\end{figure}

\subsection{\texttt{Particle:Splines} - Effect of Trajectory Length}
\label{app:results_splines_traj_length}
We tested \methodname\ on multiple horizons $T$ in the \texttt{Splines} domain, and found it to work well across horizons of 32, 64 and 128. We present sample visualizations with different $T$ values in Figure~\ref{fig:particle_splines_multi_lengths_grid} (showing the final reconstructed trajectories) and in Figure~\ref{fig:splines_multi_length_strips} (showing trajectory progression during an epsiode).

\subsection{\texttt{Particle:Splines} - Effect of Steering Dataset Size}
\label{app:results_splines_steering_size}
In Figure~\ref{fig:splines_steering_grid} we present trajectory visualizations showcasing the effect of the size of the steering buffer $\steeringbuffer$ (cf. Table~\ref{table:MSEs}).

\subsection{\texttt{Particle:Deceleration} - Effect of Steering Dataset Size}
\label{app:results_deceleration}
Similarly to Section~\ref{app:results_splines_steering_size}, in Figure~\ref{fig:deceleration_steering_grid}, we showcase the effect of the size of the steering buffer $\steeringbuffer$ (cf. Table~\ref{table:MSEs}) in the \texttt{Particle:Deceleration} domain.

\subsection{\texttt{Reacher-v2}}
We present sample reconstruction visualizations for random-action trajectories from \texttt{Reacher-v2} on 16-step sequences in Figure~\ref{fig:reacher_random_to_random}. Sample videos for 64-step \texttt{FixedJoint} sequences (trained with a GPT-based policy) can be found in the project's website: \url{https://sites.google.com/view/iter-inver}.

\subsection{\texttt{Hopper-v2}}
\label{app:results_hopper}
We show additional examples of rollouts for the \texttt{Hopper-v2} domain on 128-step sequences in Figure~\ref{fig:hopper_128_appendix}.

\subsection{Exploration}
Table~\ref{table:test_exploration} is showcasing \texttt{Splines} and \texttt{Hopper-v2} reconstruction MSEs when trained with and without exploration noise $\noise$.

\begin{table}[]
    \centering
    \caption{Evaluation of policies trained with and without exploration. We show average MSE for 3 policies; due to different domains, MSEs are comparable only within each row.  
    }
    \label{table:test_exploration}
    \begin{tabular}{ccc}
        \toprule
        \multicolumn{1}{c}{}  & Exploration Noise $\noise$ & No Exploration \\
        \midrule
        \texttt{Particle:Splines}, $T=64$, $|\steeringbuffer| = 500$  & 69.2  & 454.7 \\
        \texttt{Hopper-v2}, $T=128$, $|\steeringbuffer| = 1740$        & 483.3 & 920.6 \\
        \bottomrule
    \end{tabular}%
\end{table}

\subsection{Steering Cross-Evaluation}
\label{app:test_on_ood}
In Figure~\ref{fig:drive_towards_desired} we show example rollouts from the experiments on steering cross-evaluation (cf. Table~\ref{table:steering}).

\subsection{GRU-Based Policy Experiments}
We report similar results with a GRU-based policy to the the results shown in Table~\ref{table:MSEs} (analyzing the effect of steering dataset size) in Table~\ref{table:GRU_steering}, and similar results to Table~\ref{table:steering} (steering cross-evaluation) in Table~\ref{table:GRU_ood}.

\begin{table}[]
    \centering
    \caption{
    Evaluation of \methodname\ with a GRU policy on variable Steering Dataset size. $T=16$.
    Note that $|\steeringbuffer|=0$ represents the case where no steering is used at all. In this case, we use trajectories sampled from a random policy to initialize $|\prevbuffer|$ (see Algorithm~\ref{alg:ESI}). Note: Since we do not normalize the MSE w.r.t. $T$, these results have a different scale than Table~\ref{table:MSEs}. }
    \label{table:GRU_steering}
    \resizebox{\textwidth}{!}{%
    \begin{tabular}{lcccccc}
        \toprule
        \multicolumn{1}{c}{}  & $|\steeringbuffer|$ = 0 & $|\steeringbuffer|$ = 10 & $|\steeringbuffer|$ = 50 & $|\steeringbuffer|$ = 100 & $|\steeringbuffer|$ = 500 & $|\steeringbuffer|$ = 5000 \\
        \midrule
        \texttt{Particle:Splines}       & 5.48 & 5.18 & 3.86 & 3.61 & 3.02 & 2.89 \\
        \texttt{Particle:Deceleration}  & 0.85 & 0.89 & 0.75 & 0.73 & 0.67 & 0.71 \\
        \texttt{Reacher-v2:FixedJoint}  & 2.49 & 2.05 & 1.68 & 1.64 & 1.58 & 1.61 \\
        \bottomrule
    \end{tabular}%
    }
\end{table}

\begin{table}[]
    \centering
    \caption{Steering cross-evaluation for a GRU-policy. Horizon $T=16$. In all cases $|\steeringbuffer|=500$.}
    \label{table:GRU_ood}
    \begin{tabular}{ccc}
        \toprule
        Test Dataset & Steering Dataset & MSE \\
        \midrule
        \multirow{2}{*}{\texttt{Splines}}      & \texttt{Splines}      & 2.79  \\
                                               & \texttt{Deceleration} & 5.4 \\
        \midrule
        \multirow{2}{*}{\texttt{Deceleration}} & \texttt{Splines}      & 1.46  \\
                                               & \texttt{Deceleration} & 0.72  \\
        \bottomrule
    \end{tabular}
\end{table}

\subsection{Non-Deterministic Dynamics}
\label{app:non_deterministic_dynamics}

In this section we discuss the effect of non-deterministic dynamics on the performance of our method. We consider two sources of non-determinism: non-deterministic starting states and non-deterministic transitions.

For non-deterministic starting states, we can look at our results in the \texttt{Hopper-v2} environment as an example (see Table~\ref{table:MSEs} and Figures~\ref{fig:hopper_128} and~\ref{fig:hopper_128_appendix}). In this environment the initial state is randomized, albeit from a rather limited distribution (uniform noise of scale 5e-3 is added to initial position and velocity), and this doesn't pose a problem for our method. That said, for an initial state distribution that is very diverse, it is not clear how the policy could recover the trajectory specified by an intent with a very different starting state. This limitation is therefore inherent to the way we defined the problem (trajectory tracking).

As for non-deterministic transitions, we evaluated a variant of the \texttt{Particle:Splines} environment with noisy transitions: zero-mean Gaussian noise with standard deviation $\sigma$ is added to the action (see Figure~\ref{fig:noisy_particle_env} for examples of trajectories with different noise scales added). Figure~\ref{fig:IT-IN_perf_on_noisy_particle} shows that for moderate $\sigma$ our method still works well, while for $\sigma > 5$ performance starts to deteriorate considerably.

Thus, we find empirically that our method can handle some stochasticity in the transitions and in the starting state. We defer a more complete characterization, including the non-trivial theoretical analysis, to future work.

\begin{figure}[]
    \centering
    \includegraphics[width=0.95\textwidth,keepaspectratio]{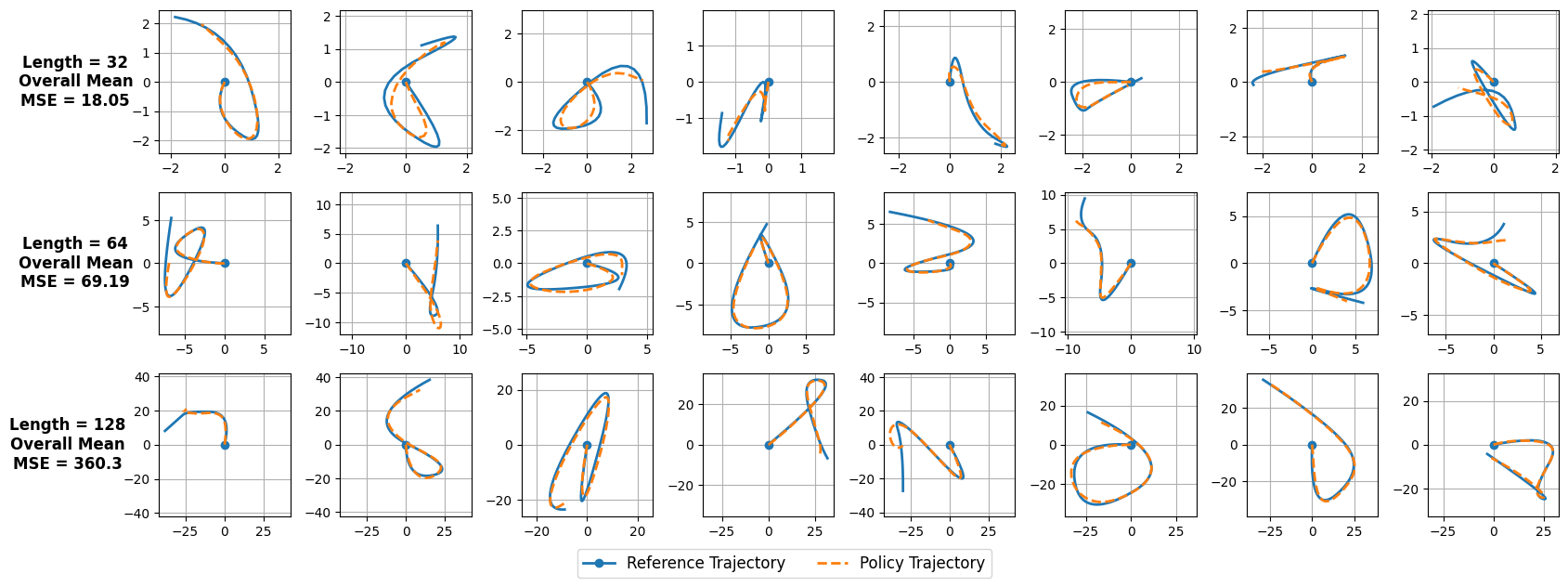}
    \caption{Example results on the \texttt{Splines} dataset, for different sequence lengths. In all cases shown here $|\steeringbuffer|=500$. To the left of each row we state the average MSE on an evaluation set of 3 policies trained with different seeds. Note the increasing scale of the plots as the sequence length increases. Also note that all trajectories start at \texttt{(0,0)}, marked by the blue circle in each plot. }
    \label{fig:particle_splines_multi_lengths_grid}

    \vspace{1em}
    \begin{subfigure}{1\textwidth}
        \centering
        \includegraphics[width=\textwidth,keepaspectratio]{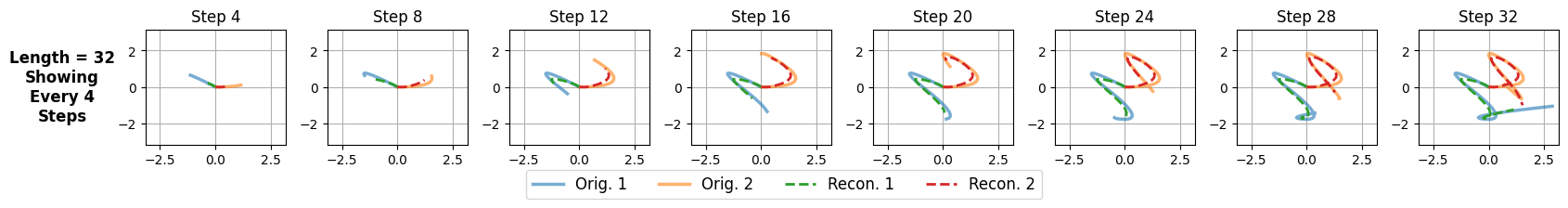}
    \end{subfigure}
    \begin{subfigure}{1\textwidth}
        \centering
        \includegraphics[width=\textwidth,keepaspectratio]{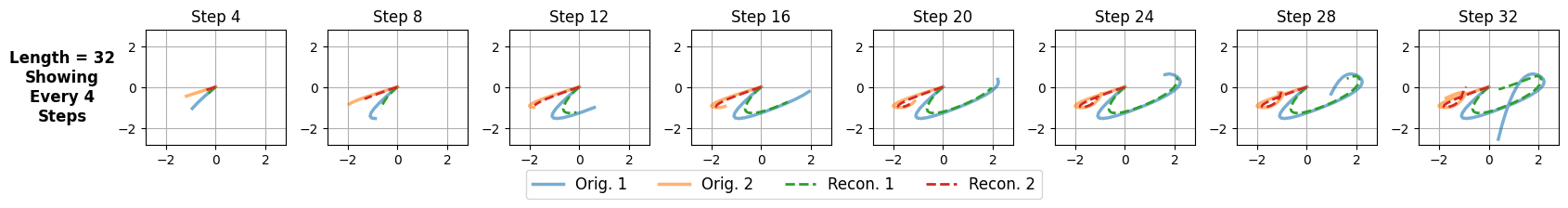}
    \end{subfigure}
    \begin{subfigure}{1\textwidth}
        \centering
        \includegraphics[width=\textwidth,keepaspectratio]{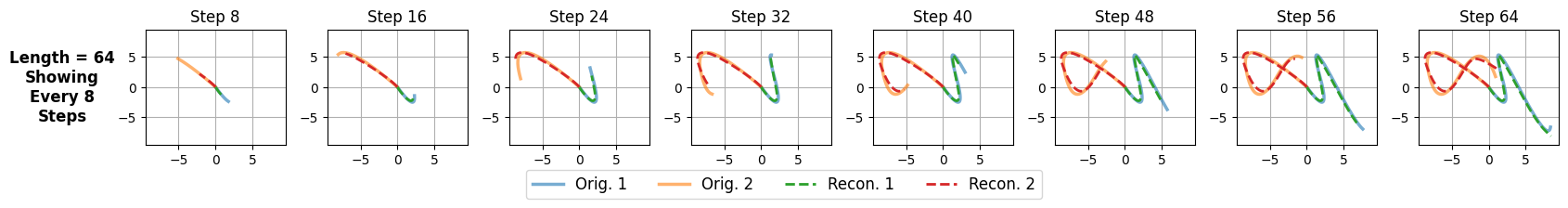}
    \end{subfigure}
    \begin{subfigure}{1\textwidth}
        \centering
        \includegraphics[width=\textwidth,keepaspectratio]{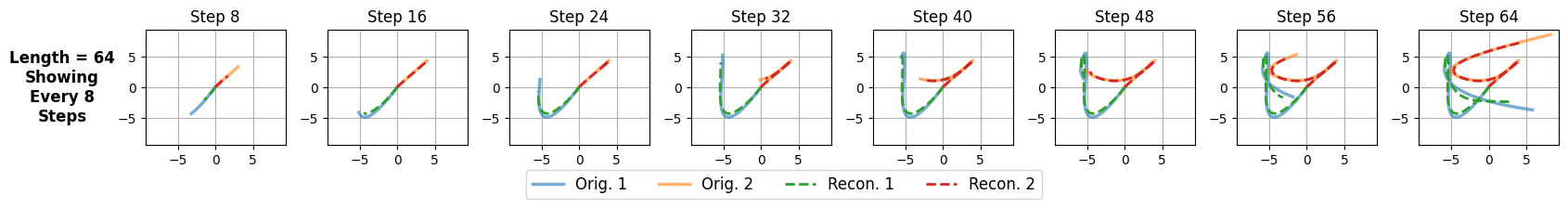}
    \end{subfigure}
    \begin{subfigure}{1\textwidth}
        \centering
        \includegraphics[width=\textwidth,keepaspectratio]{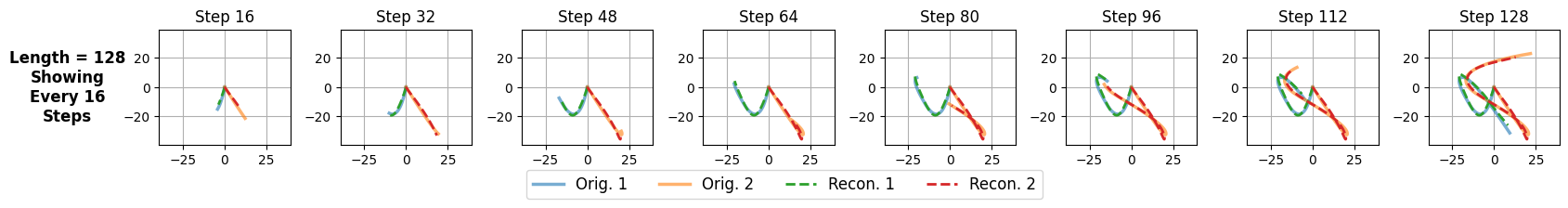}
    \end{subfigure}
    \begin{subfigure}{1\textwidth}
        \centering
        \includegraphics[width=\textwidth,keepaspectratio]{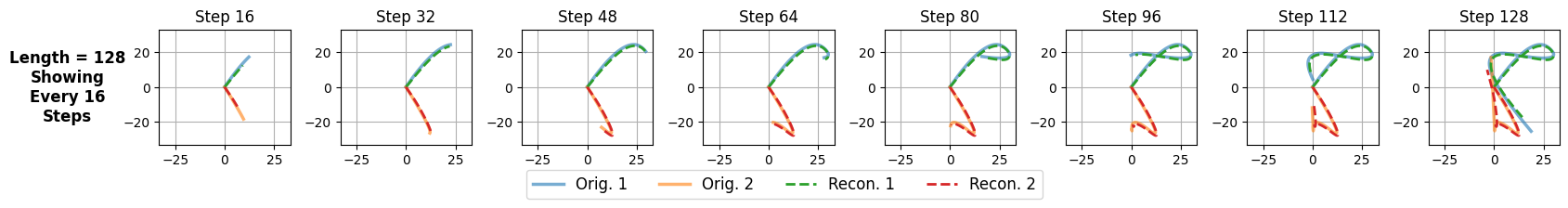}
    \end{subfigure}
    \caption{Visualization of trajectories progression in the \texttt{Splines} domain for different horizons $T$. Note the increasing scale of the plots as the sequence length increases. $|\steeringbuffer| = 500$.
    }
    \label{fig:splines_multi_length_strips}
\end{figure}

\begin{figure}
    \centering
    \includegraphics[width=\textwidth,keepaspectratio]{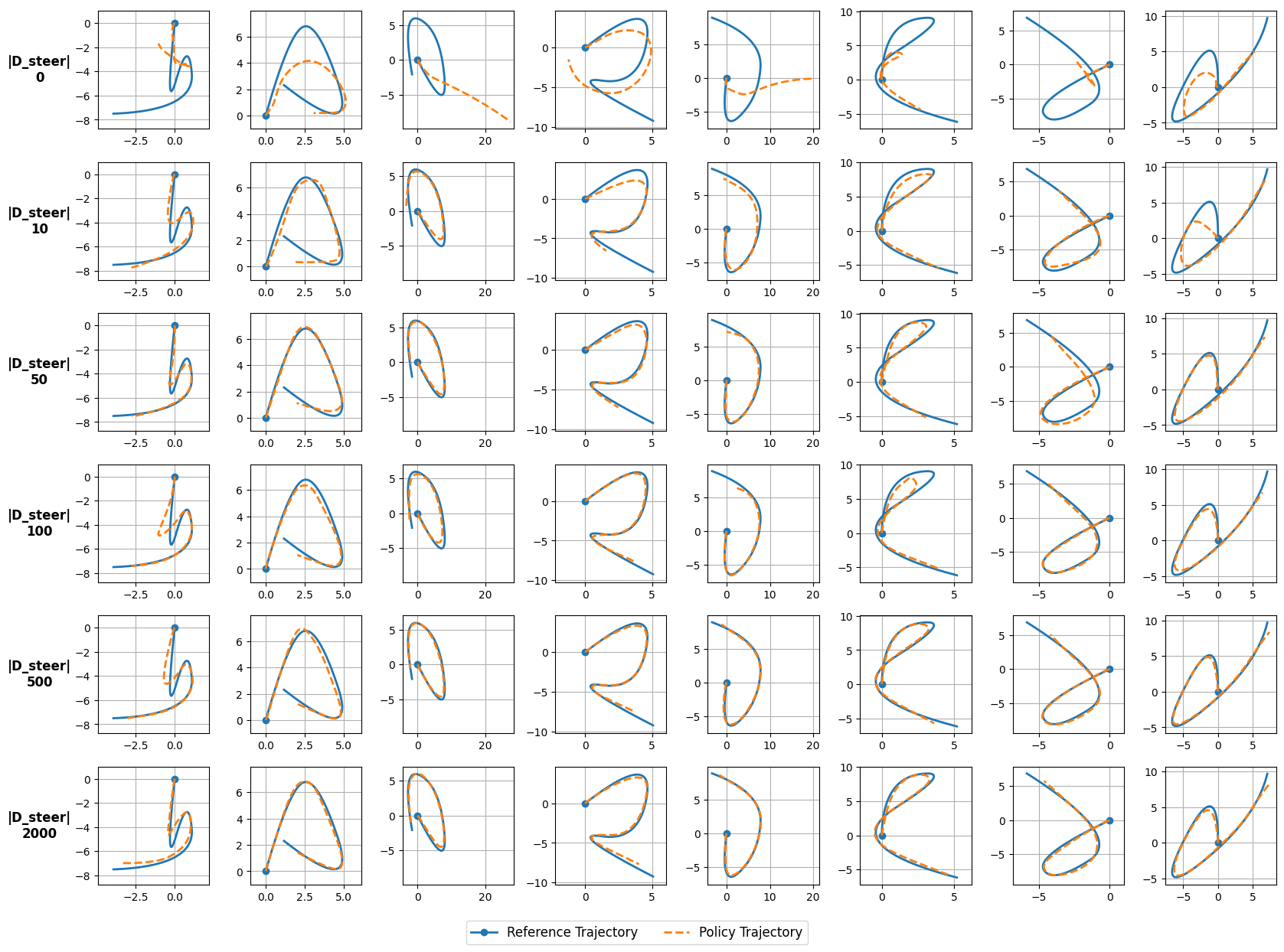}
    \vspace{-1em}
    \caption{Example results on the \texttt{Particle:Splines} dataset for policies trained with different sizes of $\steeringbuffer$.  Each row corresponds to a different size. Each column corresponds to a specific reference trajectory from the dataset, the intent of which was used as input to the policies. $T=64$ was used in all experiments.}
    \label{fig:splines_steering_grid}
\end{figure}

\begin{figure}
    \centering
    \includegraphics[width=\textwidth,keepaspectratio]{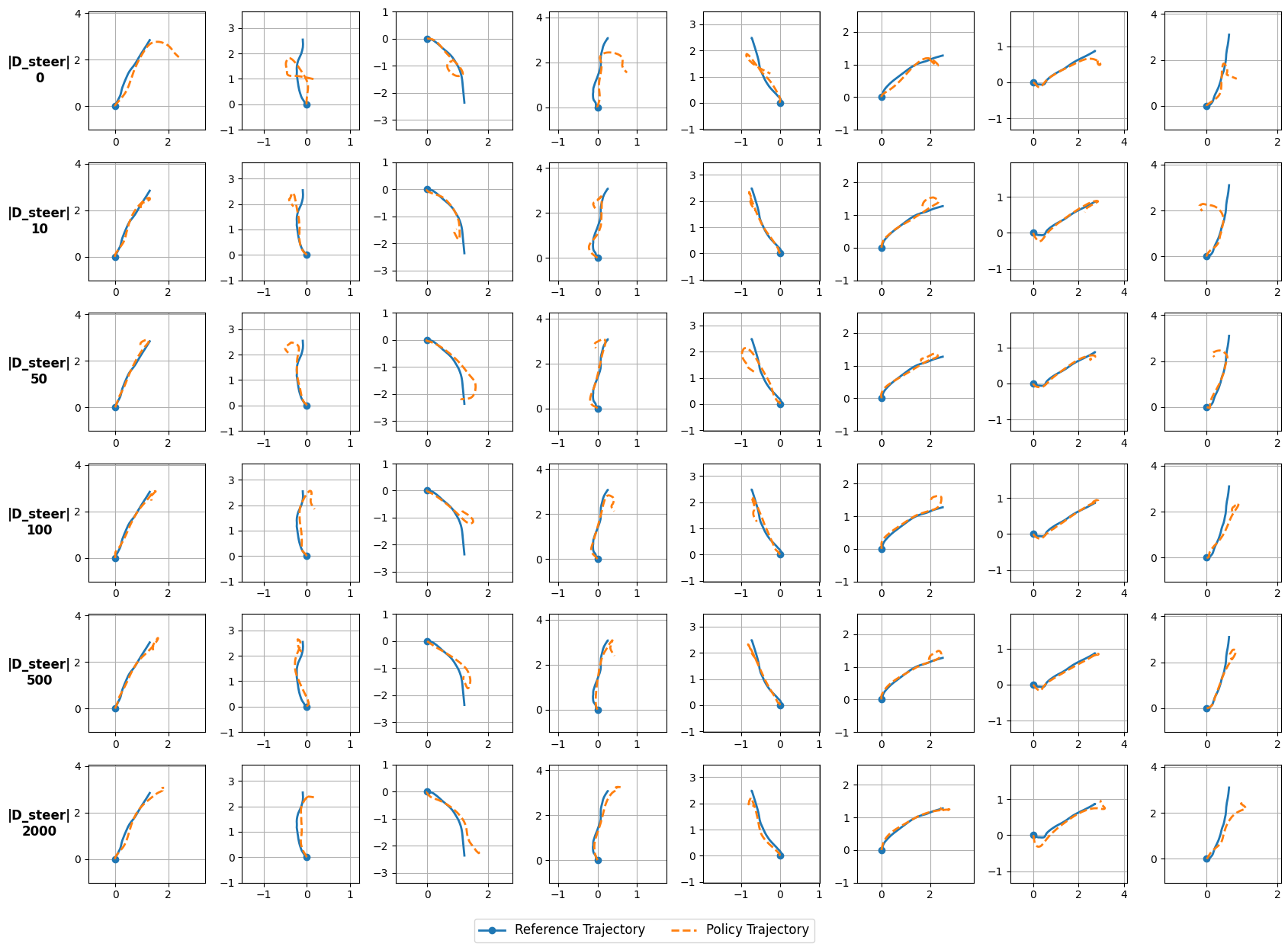}
    \vspace{-1em}
    \caption{Example results on the \texttt{Particle:Deceleration} dataset for policies trained with different sizes of $\steeringbuffer$. $T=64$ was used in all experiments. Figure structure same as in Figure~\ref{fig:splines_steering_grid}. }
    \label{fig:deceleration_steering_grid}
\end{figure}

\begin{figure}
    \centering
    \includegraphics[width=\textwidth,keepaspectratio]{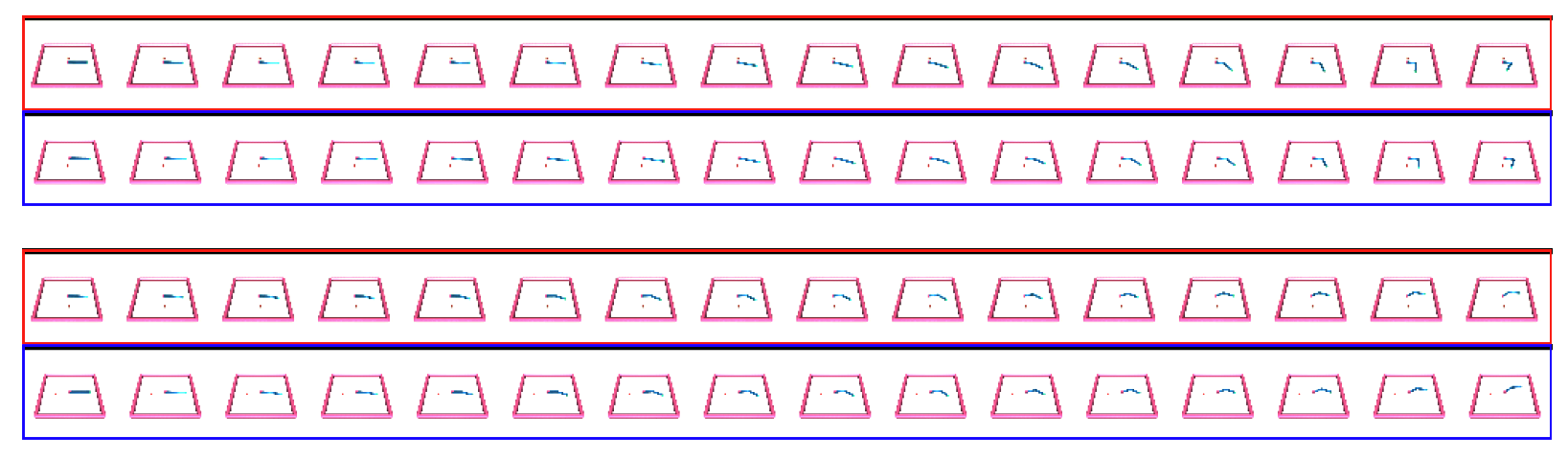}
    \caption{Examples of trajectory reconstructions in the Reacher-v2 domain. In each plot, the red row is the reference trajectory and the blue row is the policy reconstruction. These are based on a GRU policy.
    For ease of viewing, we modified the dark colors of the original rendered images. }
    \label{fig:reacher_random_to_random}
\end{figure}

\begin{figure}
    \centering
    \includegraphics[width=\textwidth,keepaspectratio]{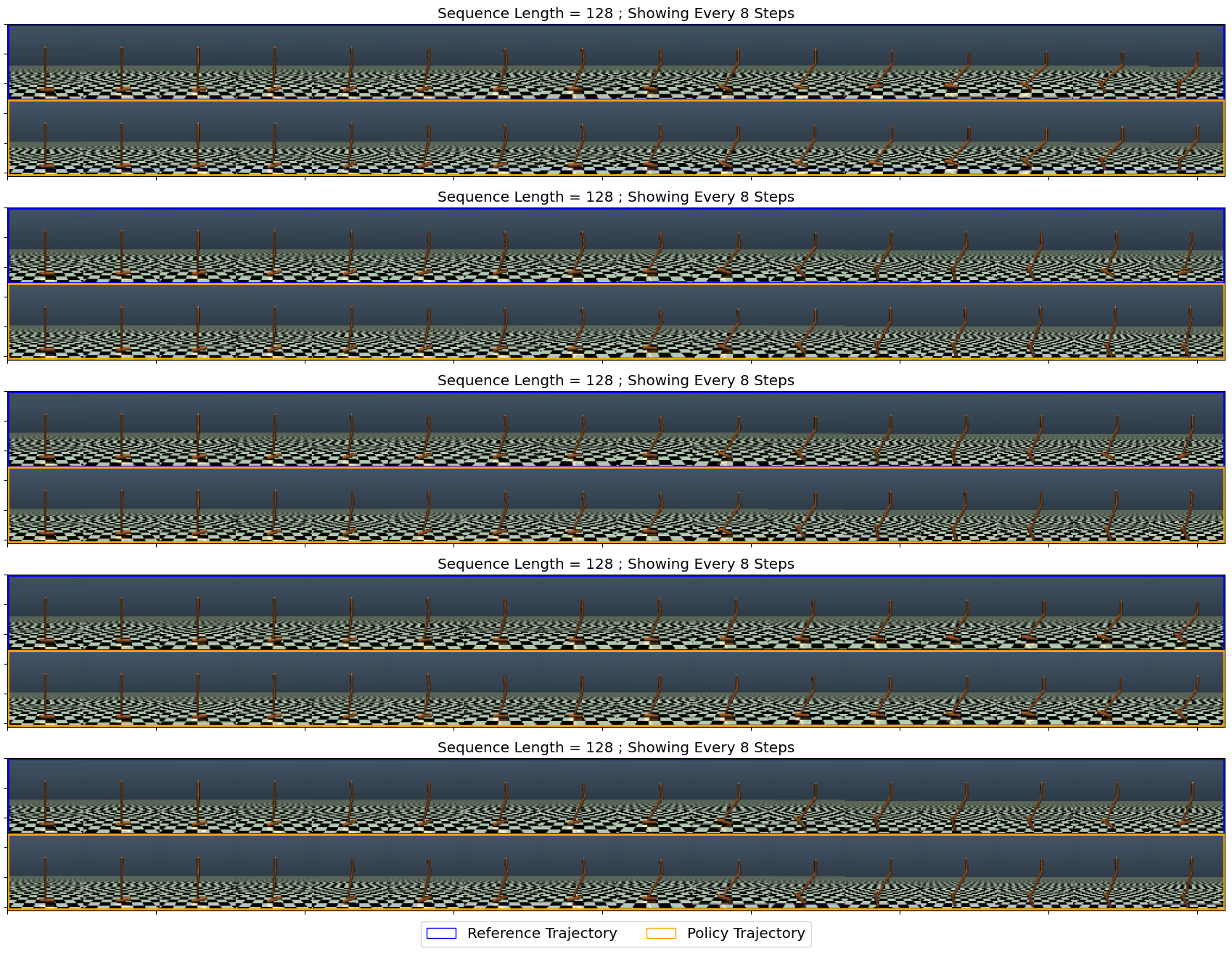}
    \vspace{-2em}
    \caption{Examples of trajectory reconstructions in the \texttt{Hopper-v2} domain, with $T=128$ and $|\steeringbuffer|=500$. }
    \label{fig:hopper_128_appendix}
    \vspace{-1em}
\end{figure}

\begin{figure}
    \centering
    \begin{subfigure}{1\textwidth}
        \centering
        \includegraphics[width=\textwidth,keepaspectratio]{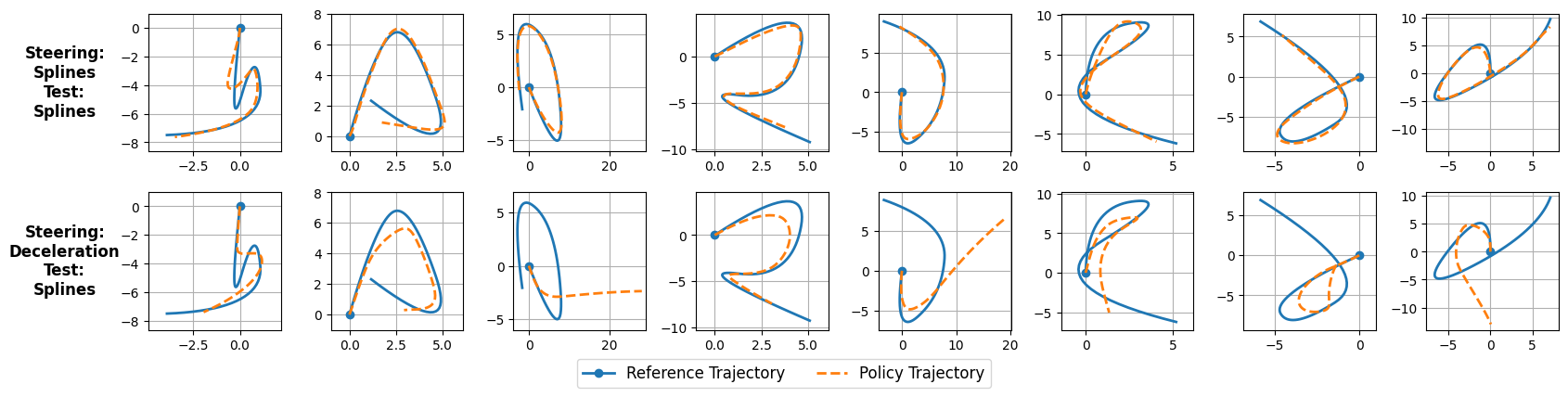}
        \caption{Testing on trajectories from \texttt{Particle:Splines}}
        \label{fig:drive_towards_desired_splines}
    \end{subfigure}
    \begin{subfigure}{1\textwidth}
        \centering
        \includegraphics[width=\textwidth,keepaspectratio]{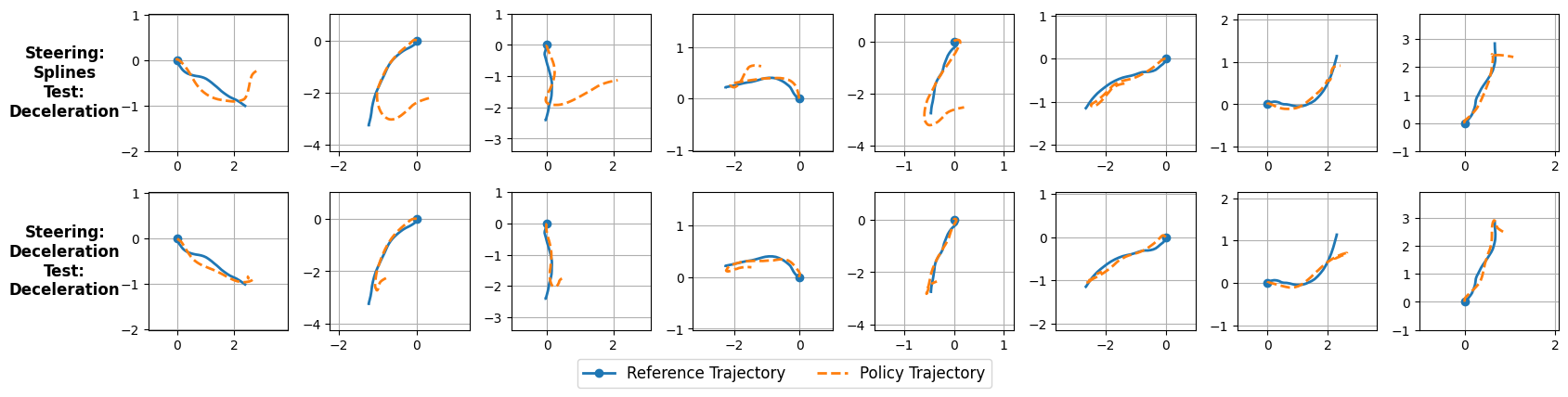}
        \caption{Testing on trajectories from \texttt{Particle:Deceleration}}
        \label{fig:drive_towards_desired_deceleration}
    \end{subfigure}
    \caption{Examples comparing how policies trained with steering intents from either \texttt{Particle:Splines} or \texttt{Particle:Deceleration} perform when tested on trajectories from either datasets. We can see that when a policy is trained with steering intents from one dataset, it performs well on that dataset and performs poorly on the other. In each column the reference trajectory is the same. }
    \label{fig:drive_towards_desired}
\end{figure}

\begin{figure}[]
    \centering
    \begin{subfigure}{0.45\textwidth}
        \centering
        \includegraphics[width=\textwidth,keepaspectratio]{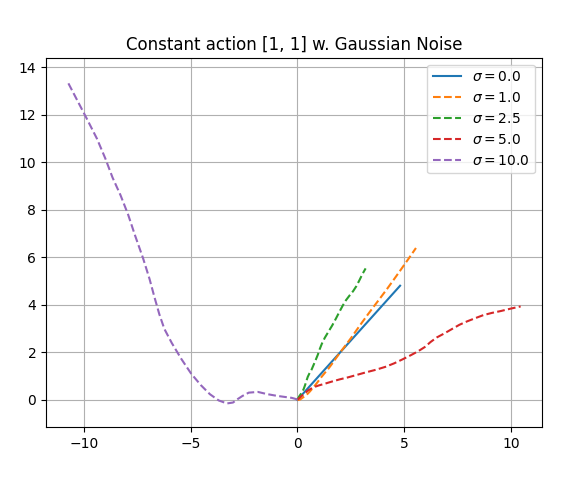}
        \caption{The \texttt{Particle} environment ($T=32$) dynamics are altered to include a zero-mean Gaussian noise with standard deviation $\sigma$ added to action. This figure shows how a fixed acceleration trajectory [action = (1, 1), for 32 consecutive steps] is affected by different noise scales. }
        \label{fig:noisy_particle_env}
    \end{subfigure}
    \hspace{2em}
    \begin{subfigure}{0.45\textwidth}
        \centering
        \includegraphics[width=\textwidth,keepaspectratio]{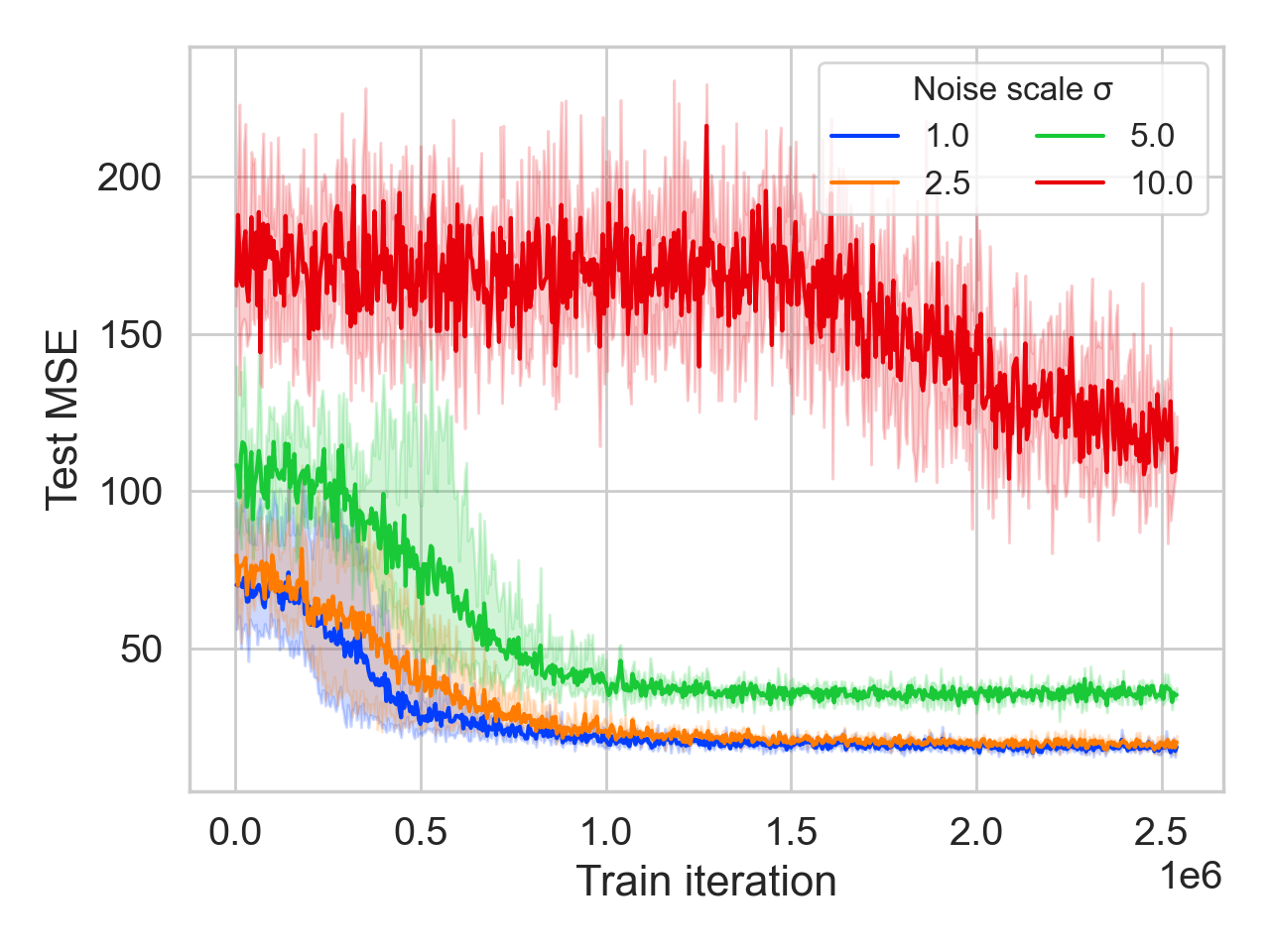}
        \caption{Test MSE on a test set of 500 trajectories not included in the steering dataset, in the \texttt{Particle:Splines} environment with noise added to the actions. For all runs $T=32$ and $|\steeringbuffer|=500$. X-axis is number of training iterations.  Each $\sigma$ value was evaluated on 3 seeds. For moderate noise scales ($\sigma = 1$ and $\sigma = 2.5$) our method still works well (MSE = 18.5 and 20.7 respectively vs. MSE = 18.0 for no noise added - see top row in Figure~\ref{fig:particle_splines_multi_lengths_grid}), while for larger noise scales ($\sigma = 5.0$ and $\sigma = 10.0$) the performance starts to deteriorate considerably. }
        \label{fig:IT-IN_perf_on_noisy_particle}
    \end{subfigure}
    \caption{Analysis of non-deterministic dynamics (see Appendix~\ref{app:non_deterministic_dynamics}) }
\end{figure}

\end{document}